\documentclass{article} 

\pdfoutput=1
\usepackage[dvipsnames,svgnames]{xcolor}


\usepackage{iclr2024_conference,times}


\usepackage{url}
\usepackage[utf8]{inputenc} 
\usepackage[T1]{fontenc}    
\usepackage[hidelinks]{hyperref}       
\usepackage{enumitem}
\usepackage{url}            
\usepackage{xspace}
\usepackage{booktabs}       
\usepackage{amsfonts}       
\usepackage{nicefrac}       
\usepackage{microtype}      
\usepackage[noend]{algpseudocode}
\usepackage{algorithm}
\usepackage{multirow}
\usepackage{multicol}
\usepackage{amsmath}
\usepackage{amssymb}
\usepackage{amsthm}
\usepackage{graphicx}
\usepackage{subcaption}
\usepackage{wrapfig}
\usepackage{mathtools}
\usepackage[capitalise, noabbrev]{cleveref}
\usepackage[normalem]{ulem}
\usepackage{annotate-equations}

\hypersetup{
    colorlinks = true,
    citecolor = [RGB]{0,130,130},
    urlcolor = [RGB]{200,0,100}
}

\title{\samplerdata: Autoregressive Data Distillation}
\author{%
    Noveen Sachdeva$^{\dagger, \ddagger}$ \hspace{0.7cm} Zexue He$^{\dagger}$ \hspace{0.7cm} Wang-Cheng Kang$^\ddagger$ \hspace{0.7cm} Jianmo Ni$^\ddagger$ \\
    \AND {}\vspace{-0.7cm}\\ \textbf{Derek Zhiyuan Cheng}$^\ddagger$ \hspace{0.7cm} \textbf{Julian McAuley}$^\dagger$ \\
    \AND {}\vspace{-0.7cm}\\
    University of California, San Diego$^\dagger$\quad Google DeepMind$^\ddagger$\\[1ex]
    \texttt{\{nosachde, zehe, jmcauley\}@ucsd.edu}\\[.5ex] 
    \texttt{\{wckang, jianmon, zcheng\}@google.com}
}




%

\iclrfinalcopy 

\begin{document}

\newcommand{\argmin}[1]{\underset{#1}{\operatorname{arg}\,\operatorname{min}}\;\ }
\newcommand{\expectation}[2]{\underset{#1}{\mathbb{E}} \left[#2\right]}

\newcommand{\sampler}{\textsc{Farzi}\xspace}
\newcommand{\samplerdata}{\textsc{Farzi Data}\xspace}
\newcommand{\dd}{\texttt{DD}\xspace}
\newcommand{\dataset}[0]{\ensuremath{\mathcal{D}}\xspace}
\newcommand{\sampled}[0]{\ensuremath{\mathcal{D}_{\mathsf{syn}}}\xspace}
\newcommand{\latentsampled}[0]{\ensuremath{\Tilde{\mathcal{D}}_{\mathsf{syn}}}\xspace}
\newcommand{\vocab}[0]{\ensuremath{\mathcal{V}}\xspace}
\newcommand{\sentence}[0]{\ensuremath{\mathbf{x}}\xspace}
\newcommand{\decoder}[0]{\ensuremath{\mathbf{M}}\xspace}

\newcommand{\EE}{\operatornamewithlimits{\mathbb{E}}} 

\makeatletter \renewcommand\paragraph{\@startsection{paragraph}{4}{\z@} {2mm \@plus1ex \@minus.2ex} {-0.7em} {\normalfont\normalsize\bfseries}} \makeatother

\newcommand{\listheader}[1]{\item \emph{#1} }

\newcommand\overstar[1]{\ThisStyle{\ensurestackMath{%
  \setbox0=\hbox{$\SavedStyle#1$}%
  \stackengine{0pt}{\copy0}{\kern.2\ht0\smash{\SavedStyle*}}{O}{c}{F}{T}{S}}}}

\newtheorem{theorem}{Theorem}[section]
\newtheorem{lemma}[theorem]{Lemma}
\newtheorem{proposition}[theorem]{Proposition}
\newtheorem{definition}[theorem]{Definition}

\newcommand{\STAB}[1]{\begin{tabular}{@{}c@{}}#1\end{tabular}}

\newcommand{\bs}[1]{\ensuremath{\bm{\mathit{#1}}}}

\newcommand{\ie}[0]{\emph{i.e.}\xspace}
\newcommand{\st}[0]{\emph{s.t.}\xspace}
\newcommand{\etc}[0]{\emph{etc.}\xspace}
\newcommand{\eg}[0]{\emph{e.g.}\xspace}
\newcommand{\vs}[0]{\emph{vs.}\xspace}
\newcommand{\wrt}[0]{\emph{w.r.t.}\xspace}

\maketitle

\begin{abstract}
    We study data distillation for auto-regressive machine learning tasks, where the input and output have a strict left-to-right causal structure. More specifically, we propose \sampler, which summarizes an event sequence dataset into a small number of \emph{synthetic} sequences --- \samplerdata \ --- which are optimized to maintain (if not improve) model performance compared to training on the full dataset. Under the hood, \sampler conducts memory-efficient data distillation by (i) deriving efficient reverse-mode differentiation of the Adam optimizer by leveraging Hessian-Vector Products; and (ii) factorizing the high-dimensional discrete event-space into a latent-space which provably promotes implicit regularization. Empirically, for sequential recommendation and language modeling tasks, we are able to achieve $98-120$\% of downstream full-data performance when training state-of-the-art models on \samplerdata of size as little as $0.1$\% of the original dataset. Notably, being able to train \emph{better models with significantly less data} sheds light on the design of future large auto-regressive models, and opens up new opportunities to further scale up model and data sizes.
\end{abstract}

\section{Introduction}

The effectiveness of machine learning models relies heavily on the \emph{quantity} and \emph{quality} of training data. While the quantity of training data is always well-regarded in the scaling-laws of training highly-parameterized neural networks \citep{chinchilla, kaplan_scaling, scaling_retrieving, scaling_vit, glam}, the quality of underlying data is often overlooked. Despite being an intuitive covariate in downstream model performance, there does not exist an efficient out-of-the-box solution for measuring the quality of a data point. Some popular heuristics (\eg, data valuation \citep{shapley_main}, coresets \citep{bilevel_coresets}) fall short from a variety of angles \citep{influence_wrong, shapley_problems, forgetting_events, k_centers}. 

Data distillation (\dd) (see \citet{dd_survey} for a comprehensive survey) offers a promising alternative to explicitly tagging the quality of each datapoint. Loosely, \dd approaches aim to \emph{synthesize} a terse data summary solely intended to train models to the same (if not better) quality as training them on the original dataset. In this paper, we propose \sampler, a \dd approach designed specifically for synthesizing high-fidelity auto-regressive data summaries. We call the data synthesized by \sampler as \samplerdata.

\samplerdata takes a step towards addressing the massive costs (\eg, financial, environmental, \etc) associated with training large auto-regressive models \citep{gpt4, palm2, whisper} on massive amounts of pretraining data by (i) implicitly \emph{filtering} out low-quality sources of information resulting in a terse data summary, and (ii) \emph{re-organizing} the data in a format that is most pertinent for model training. Intuitively, a vast majority of underlying \emph{information} in such auto-regressive datasets is redundant from the downstream task's perspective. For example, looking at recommender systems, a predictive model wouldn't necessarily need trillions of event-level data from billions of users to accurately model user-behaviour patterns. 

Typical \dd techniques \citep{zhao_dc, zhao_dm, zhao_dsa, kip_conv, mtt, frepo, remember_past} are geared toward low-resolution image datasets due to (i) computationally expensive data optimization, and (ii) generation-friendly continuous domain of images (pixels). On the other hand, auto-regressive data generally consists of sequences of discrete \emph{tokens} (\eg, sub-words) with a potentially large vocabulary. Further, many applications call for sequences with a long list of such tokens. \sampler addresses the aforementioned characteristics of auto-regressive data by performing \emph{data distillation in a latent space} by organizing \samplerdata into (i) a \emph{latent} data summary that captures the downstream task patterns, and (ii) a decoder (\eg, token-embeddings) that maps the latent-space back to the token-space. In addition to making \sampler optimization-friendly (both of the aforementioned data components are non-discrete/continuous), we demonstrate that such latent-parameterization provably promotes implicit regularization when training models on \samplerdata (\cref{thm:implicit_reg}). 
To summarize, we highlight four main contributions of this paper:

\begin{figure*}[t!] \centering
    \centering
    \vspace{-4pt}
    \includegraphics[width=0.95\linewidth]{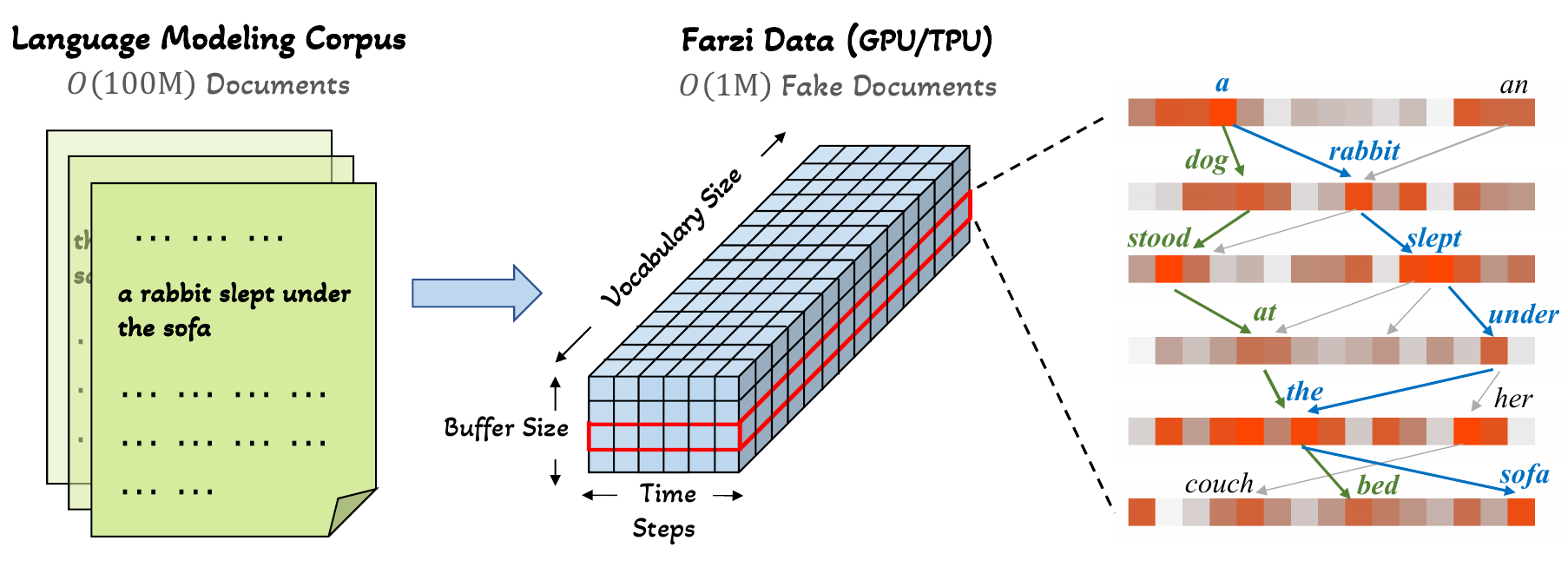}
    \caption{
    Visualization of \samplerdata in the context of language modeling.
    \samplerdata can be seen as a 3-D tensor comprising of \emph{sequences of distributions over tokens}, where a single \emph{distribution sequence} fuses the information content of multiple discrete sequences. E.g, a single distribution sentence can unfold into an entire tree of similar sentences like ``the rabbit slept under the sofa'' or ``a dog stood at the bed'' as depicted in the figure. Such a parameterization: (i) makes the dataset GPU/TPU friendly; (ii) reduces the cardinality of the dataset leading to efficient training; and (iii) enables models to be trained on \emph{fuzzy sequences}, hopefully leading to robust learning.
    }
    \vspace{-3pt}
    \label{fig:overview}
\end{figure*} 

\begin{itemize}[leftmargin=.15in]
    \item We develop \sampler, a scalable \dd technique for summarizing massive auto-regressive datasets, and demonstrate \samplerdata's sample efficiency over $5$ datasets spanning sequential recommendation and language modeling tasks. Training on \samplerdata, we are able to achieve up to $98-120$\% of full-data performance for state-of-the-art models using as little as $0.1$\% of the original dataset size, as well as noting a strong cross-architecture generalization, \ie, being able to train various (student) models on \samplerdata synthesized using a given (teacher) model. \looseness=-1

    \item Building atop the meta-matching framework of \dd 
    , we propose two crucial modifications for largely improved sample efficiency. First, conducting an investigative study on the role of inner-loop optimizer in \dd, we conclude Adam \citep{adam} to be much more adept than SGD (with or without momentum) for \dd. This is in stark contrast with existing \dd and meta-learning studies where SGD is the de-facto optimizer of choice. We further improve \sampler's sample quality by leveraging pretrained training trajectories for initialization in the meta-matching optimization. \looseness=-1

    \item In addition to generating high-fidelity data, \sampler is computationally highly scalable. Firstly, parameterizing \samplerdata into a latent data summary and a token decoder saves large amount of time and memory during optimization, thereby making \sampler (roughly) independent of the vocabulary size. Further, we derive an efficient reverse-mode differentiation of Adam which has a memory complexity independent of the number of inner-loop steps, unlike autograd systems which store all intermediate variables, therefore leading to $\mathcal{O}(100)\times$ memory footprint reduction.
    
    \item We provide a formal analysis of \sampler from various standpoints. We firstly show that \samplerdata's latent parameterization implicitly promotes regularization and provably improves generalization. Previous studies have observed such \emph{data overfitting} effects in \dd empirically \citep{frepo}, but we are the first to study its theoretical underpinnings. We further demonstrate the correctness of our proposed reverse-mode differentiation of Adam. 

\end{itemize}

\section{Related Work}

\paragraph{Data downsampling.}
The complexity and training time for state-of-the-art models from different domains has grown exponentially in the recent years \citep{gpt4, bert4rec, eclare, stable_diffusion}. 
Sampling has been the classic approach to summarize large datasets, approaches for which
can be grouped into the following categories:
\textbf{(i) Coreset construction} techniques which sample a weighted subset of the given dataset to accelerate model training \citep{coreset_1, coreset_bilevel, coreset_bilevel_2, coreset_scale}. Being a combinatorial optimization, coreset construction techniques typically leverage submodularity assumptions \citep{bilmes_submodularity} to optimize the coreset in a tractable manner.
\textbf{(ii) Data valuation} approaches which typically leverage shapley values \citep{shapley_old} to tag the \emph{value} of each data point for model training \citep{shapley_banzhaf, shapley_main, shapley_oob, shapley_survey}. Notably, such data valuation methods turn out to be computationally intractable even for moderate sized datasets. \textbf{(iii) Heuristic samplers} that build upon designing ad-hoc notions of data quality. Two prominent schools-of-thought in designing such heuristics has been to either preserve notions like diversity \citep{density, semdedup, prototypes}, discrepancy \citep{discrepancy}, \etc in some metric-space of the inputs, or use the loss-values from some proxy model to tag the difficulty (and thereby, quality) for each datapoint \citep{el2n, svp, svp_cf, selective_backprop}.

\paragraph{Data distillation.}
Contrary to sampling datapoints from a given dataset, data distillation approaches aim to \emph{synthesize} high-quality data summaries for sample-efficient model training through bilevel optimization (see \citet{dd_survey} for a comprehensive survey). Prominent existing approaches are designed for summarizing images \citep{dd_orig, zhao_dc, zhao_dm, zhao_dsa, mtt, frepo, remember_past, kip_conv}, graphs \citep{graph_distill_kdd_22, graph_distill_iclr_22}, and recommender systems \citep{nips_22}. Such approaches can essentially be viewed as meta-learning approaches (see \citet{meta_learning_survey} for a comprehensive survey) with the meta-optimization happening over the data summary instead of common applications like model initialization \citep{maml} or task hyper-parameters \citep{reverse_sgd, hyperopt_vicol}.

\paragraph{Autoregressive tasks.}
A variety of machine learning tasks are auto-regressive, \eg, language modeling \citep{gpt4, webtext, c4}, sequential recommendation \citep{svae, sasrec, netflix_data}, self-driving \citep{gapformer, waymo}, \etc Such tasks have a clear left-to-right causal structure with one event preceding the other, typically in time. Further, since a majority of such tasks are semi-supervised and are associated with large-amounts of naturally occurring data; training large foundation models \citep{foundation_models} for such data can become daunting despite its practicality, thereby limiting overall research progress. Concerningly, to the best of our knowledge, only simple \emph{data sampling heuristics} scale to such large auto-regressive datasets \citep{forgetting_events, k_centers}.

\section{\sampler: Synthesizing High-Fidelity Autoregressive Data Summaries} \label{sec:details}

\paragraph{Task \& Notation.} Given an autoregressive dataset $\dataset \triangleq \{ \sentence_i \}_{i=1}^{|\dataset|}$ where $\sentence_i \triangleq [ x_{ij} \in \vocab ]_{j=1}^{|\sentence_i|}$ is an ordered sequence of tokens, each belonging to the vocabulary of all possible tokens \vocab. We aim to synthesize a data summary $\sampled \in \mathbb{R}^{\mu \times \xi \times \dim(\vocab)}$ consisting of $\mu$ fake sequences of maximum length $\xi$, \st, $\mu \ll |\dataset|$. More specifically, we seek to construct \sampled in such a way that a representative learning algorithm $\Phi_\theta : \vocab^n \mapsto \vocab$ trained on \sampled using an autoregressive task (\eg, next-token-prediction \citep{gpt_1}, cloze \citep{cloze}, \etc) specified by a cost function $l : \vocab \times \vocab \mapsto \mathbb{R}$ can achieve performance equivalent to that of training $\Phi_\theta$ on the original dataset \dataset. Taking next-token-prediction \citep{gpt_1} as a representative predictive task, we denote the empirical risk as $\mathcal{L}_{\dataset}(\theta) \triangleq \mathbb{E}_{\sentence \sim \dataset,~x_i \sim \sentence}[l(\Phi_{\theta}(\sentence_{1:i}), x_{i+1})]$ for notational convenience, where $\sentence_{1:i}$ represents the sequence of first $i$ tokens in \sentence.

\begin{figure*}[t!] \centering
    \centering
    \includegraphics[width=\linewidth]{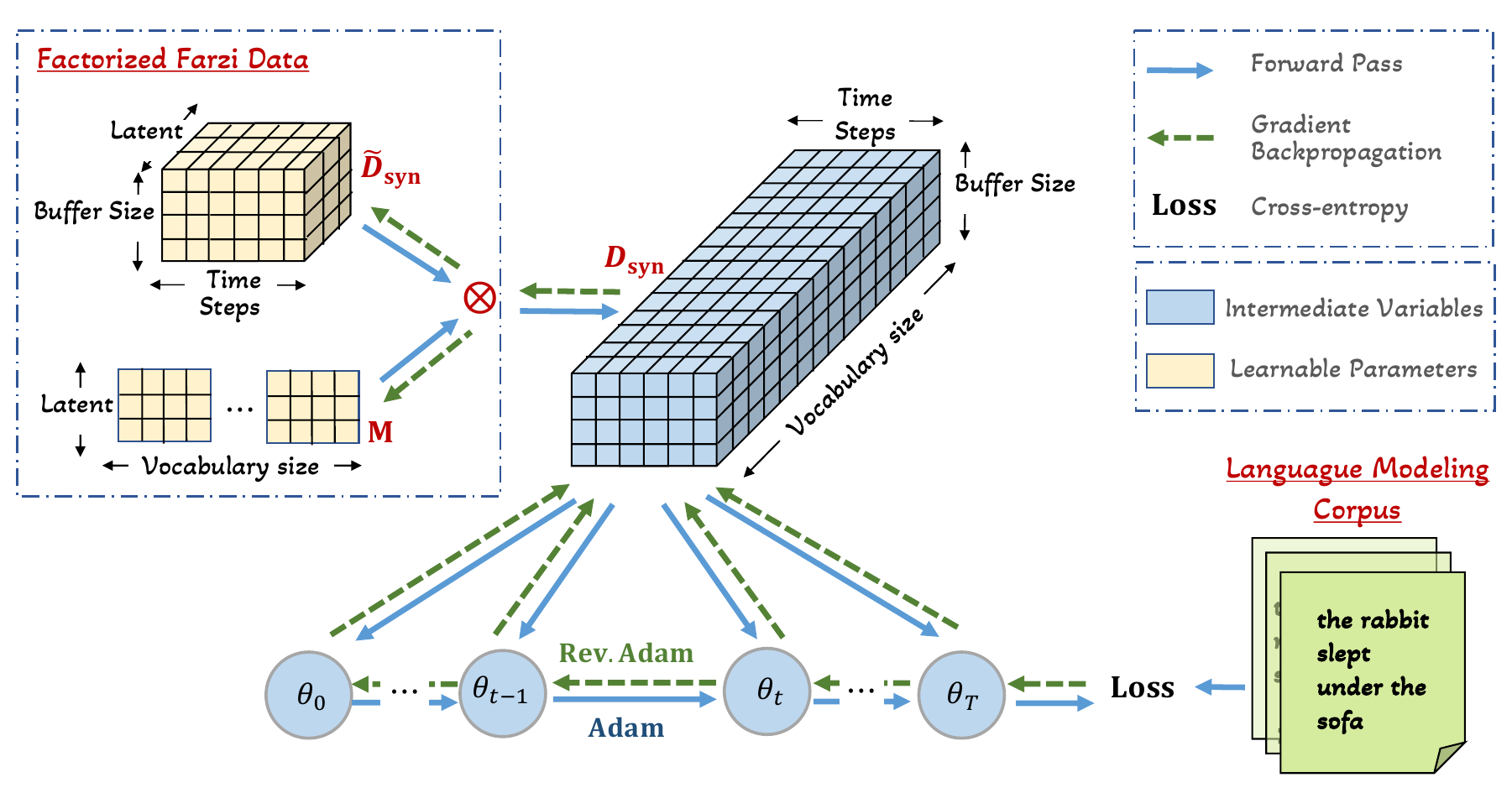}
    \caption{ 
     Visualization of a single outer-loop step in \sampler demonstrated using the language modeling predictive task. In this framework, each outer-loop step first materializes \samplerdata using (a batch of) its respective low-rank counterparts, followed by training a learning algorithm on \samplerdata for $T$-steps using Adam. The meta-gradient to update the factorized \samplerdata is obtained using efficient reverse-mode Adam outlined in \cref{alg:rev_adam}. This process (outer-loop step) is repeated till convergence, or for a fixed number of iterations.
     } 
    \label{fig:farzi overview}
\end{figure*}

\paragraph{Methodology.} We cast the 
problem of autoregressive \dd as a meta-learning problem, wherein the \emph{inner-loop} trains a learning algorithm on the data summary, and the \emph{outer-loop} evaluates its \emph{quality} via $l(\cdot, \cdot)$ on the original dataset to directly update the data summary via gradient descent. More formally, a na\"ive bilevel optimization problem can be framed as follows:
\begin{equation} \label{eqn:naive_meta_matching}
    \argmin{\sampled} \expectation{\theta_0 \sim \Theta}{\mathcal{L}_{\dataset}(\theta^*)} ~~~\text{s.t.}~~~ \theta^* \triangleq \argmin{\theta} \mathcal{L}_{\sampled}(\theta ~|~ \theta_0) ~~,
\end{equation}
where $\Theta$ is a distribution to initialize model parameters (\eg, uniform, Kaiming \citep{kaiming}, \etc). Such a formulation is commonly termed as \emph{meta-model matching based \dd} (see \citet{dd_survey} for a taxonomy of existing approaches), and is associated with significant computational complexity in terms of both time and memory. Typical approaches resort to local optimization (\eg, SGD) in the inner-loop, and Truncated Backpropagation Through Time (T-BPTT) by unrolling a finite number of inner optimization steps to obtain the meta-gradient. Notably, \dd becomes infeasible --- even after making such assumptions --- when the data is autoregressive as each data-point is associated with (i) a large discrete token vocabulary, \ie, $\dim(\vocab)$; and (ii) a third sequential dimension, \ie, $\xi$. Hence, the computational complexities of existing \dd techniques grows by a factor of $\approx\xi\cdot\dim(\vocab)$.

To alleviate the computational challenges, \sampler performs \emph{data distillation in a latent space}. More specifically, \sampler factorizes \sampled into: (i) a latent data summary $\latentsampled \in \mathbb{R}^{\mu \times \xi \times d}$ where $d \ll \dim(\vocab)$; and (ii) a token-decoder matrix $\decoder \in \mathbb{R}^{d \times \dim(\vocab)}$. Finally, we can compose the latent data summary and the token-decoder to obtain the final data summary: $\sampled \equiv \operatorname{softmax}(\latentsampled \cdot \decoder \ / \ \tau)$, where $\tau \in \mathbb{R}^+$ represents the temperature in $\operatorname{softmax}(\cdot)$ and controls the entropy in \sampled. Such a factorization makes \sampler scalable to both extremely large datasets, \ie, large $|\dataset|$ as well as datasets with large token vocabularies, \ie, large $\dim(\vocab)$. 

In addition to promoting scalability, we prove that \samplerdata's latent parameterization implicitly promotes regularization while training downstream models (\cref{thm:implicit_reg}). More specifically, we leverage the concepts of \emph{data representativeness} and \emph{Rademacher complexities} \cite[Chapter~26]{radamacher_book} to show that explicit rank regularization while synthesizing data summaries (\eg, latent factorization) strictly promotes generalization. Notably such \emph{data overfitting} has been previously (empirically) noted to notoriously affect \dd \citep{frepo}, but we are the first to explore the theoretical underpinnings.

\vspace{0.2cm}
\begin{theorem} \label{thm:implicit_reg}
Let $\sampled \in \mathbb{R}^{\mu \times \xi \times \dim(\vocab)}$ be parameterized using $\latentsampled \in \mathbb{R}^{\mu \times \xi \times d}$ and $\decoder \in \mathbb{R}^{d \times \dim(\vocab)}$, and $\mathcal{D}_{\mathsf{naive}} \in \mathbb{R}^{\mu \times \xi \times \dim(\vocab)}$ denote the non-parameterized data. Let $\mathcal{F}$ be the function-class of quadratic classifiers, and $\operatorname{Rep}(\mathcal{F}, \mathcal{D})$ denote the representativeness of a training set $\mathcal{D}$ (lower is better); then if $d < \min(\mu, \xi\cdot\dim(\vocab))$:
\begin{equation*}
    \mathbb{E}_{\latentsampled, \decoder}[\operatorname{Rep}(\mathcal{F}, \latentsampled \cdot \decoder)] < \mathbb{E}_{\mathcal{D}_{\mathsf{naive}}}[\operatorname{Rep}(\mathcal{F}, \mathcal{D}_{\mathsf{naive}})] ~~.    
\end{equation*}
\end{theorem}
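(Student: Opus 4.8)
The plan is to bound representativeness by a Rademacher surrogate and then show that the latent factorization strictly shrinks that surrogate. First, by the textbook symmetrization argument, for any (possibly random) training set $S$ one has $\mathbb{E}_S[\operatorname{Rep}(\mathcal{F},S)] \le 2\,\mathbb{E}_S[\hat{\mathcal{R}}_S(\mathcal{F})]$, where $\hat{\mathcal{R}}_S(\mathcal{F})$ is the empirical Rademacher complexity of $\mathcal{F}$ on $S$ (see \cite[Chapter~26]{radamacher_book}; a contraction step passes from the $0/1$ risk to the real-valued quadratic scores). It therefore suffices to compare $\mathbb{E}_{\latentsampled,\decoder}[\hat{\mathcal{R}}(\mathcal{F}\text{ on }\latentsampled\cdot\decoder)]$ against $\mathbb{E}_{\mathcal{D}_{\mathsf{naive}}}[\hat{\mathcal{R}}(\mathcal{F}\text{ on }\mathcal{D}_{\mathsf{naive}})]$, with the two ensembles matched (e.g.\ the non-parameterized data drawn with the same first and second moments as $\latentsampled\cdot\decoder$, but full-rank almost surely) so the comparison is apples-to-apples.

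\textbf{Using the factorization.}
The key structural fact is that, conditional on any decoder $\decoder$, each synthetic sequence (i.e.\ each $\mu$-slice of $\latentsampled\cdot\decoder$) has all of its token-rows confined to the row space of $\decoder$, a subspace of $\mathbb{R}^{\dim(\vocab)}$ of dimension at most $d$; hence the whole distilled dataset is supported on a linear subspace whose dimension is governed by $d$ (at most $d$ along the token axis, hence at most $\xi d$ after flattening, and trivially at most $\mu$), whereas a generic $\mathcal{D}_{\mathsf{naive}}$ spans a subspace of dimension $\min(\mu,\xi\cdot\dim(\vocab))$ with probability one. Writing a quadratic classifier as a linear functional of the degree-$\le 2$ feature map, $\mathcal{F}$ restricted to data supported on an $r$-dimensional subspace is exactly the class of linear functionals on the induced $O(r^2)$-dimensional feature subspace, so $\hat{\mathcal{R}}_S(\mathcal{F})$ is controlled by a quantity that is strictly increasing in that effective dimension --- either through a pseudo-dimension bound $\hat{\mathcal{R}}\lesssim\sqrt{\binom{r+2}{2}/|S|}$ (truncated at the trivial bound $1$), or equivalently through the rank of the Gram matrix of the quadratic features, which enters Rademacher bounds via a factor $\sqrt{\operatorname{rank}}$ once one uses the spectral-norm-bounded form of $\mathcal{F}$.

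\textbf{Conclusion and main obstacle.}
Instantiating the $d$-induced effective dimension for $\latentsampled\cdot\decoder$ and $\min(\mu,\xi\cdot\dim(\vocab))$ for $\mathcal{D}_{\mathsf{naive}}$, the hypothesis $d<\min(\mu,\xi\cdot\dim(\vocab))$ forces the naive term to strictly dominate (in the saturated regime it equals the trivial bound while the latent term lies strictly below it; in the unsaturated regime the $\sqrt{\cdot}$ is strictly monotone); since this inequality is pointwise in $(\latentsampled,\decoder)$ and holds for almost every $\mathcal{D}_{\mathsf{naive}}$, it survives the outer expectations, which gives the claim. I expect the real work to be exactly this last calibration: the usual \emph{norm}-based Rademacher bounds for (quadratic) linear predictors are blind to the rank of the data, so one must pick a complexity measure that genuinely responds to the low-rank constraint and then track its constants and the role of the sample size $\mu$ carefully, so that precisely the stated threshold --- and not a more restrictive one --- buys a \emph{strict} improvement. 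A lesser, bookkeeping-type difficulty is fixing the exact meaning of ``representativeness of a training set'' (the population law against which the risk gap is measured) and the joint ensemble over $\latentsampled,\decoder$ versus $\mathcal{D}_{\mathsf{naive}}$, so that both expectations are defined on common ground.
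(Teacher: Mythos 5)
Your proposal follows essentially the same route as the paper's proof: symmetrization to pass from representativeness to (expected) Rademacher complexity, then a rank-sensitive complexity bound for norm-bounded quadratic classifiers, and finally strict monotonicity in the rank ($\le d$ for $\latentsampled\cdot\decoder$ versus $\min(\mu,\xi\cdot\dim(\vocab))$ for generic $\mathcal{D}_{\mathsf{naive}}$). The ``spectral-norm-bounded form with a $\sqrt{\operatorname{rank}}$ factor'' you mention is exactly how the paper resolves the calibration obstacle you flag --- it invokes an intrinsic-dimension bound for quadratic classifiers and bounds the intrinsic dimension by the rank --- and, like your plan, it ultimately compares the two sides only through these upper bounds.
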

\begin{proof}
See \cref{proof:implicit_reg} for the relevant preliminaries and proof.
\end{proof}

While typical bilevel optimization approaches use SGD in the inner loop \citep{remember_past} due to efficient reversible dynamics of SGD (see \citet{reverse_sgd} for efficient reverse-mode SGD), we empirically observe that in our setting of autoregressive \dd, Adam optimization \citep{adam} in the inner-loop is crucial for downstream \dd performance (see \cref{fig:efficient_adam}). Further, we also note that a significant number of inner-loop optimization steps --- in the order of $100$s --- are needed for good generalization for both Adam and SGD based \dd, as is concurrently reported by other work \citep{remember_past}. To this end, we derive an efficient approximation of reverse-mode differentiation of the Adam optimization in \cref{alg:rev_adam}. 

\begin{algorithm}[t]
    \centering
    \caption{Reverse-mode differentiation of Adam. See \cref{appendix:algorithms} for the Adam algorithm.}
    \label{alg:rev_adam}
    \begin{algorithmic}[1]
        \State \textbf{Input:} $\mathbf{w}_T$, $\mathbf{m}_T$, $\mathbf{v}_T$, $\gamma$, $\alpha$, $\epsilon$, $L(w, x)$, meta-objective $f(w)$
        \State \textbf{Initialize:} $d\mathbf{m} \leftarrow 0$, $d\mathbf{x} \leftarrow 0$, $d\mathbf{w} \leftarrow \nabla_\mathbf{w} f(\mathbf{w}_T)$
        \For{$t = T$ \textbf{to} $1$}
            \State $\hat{\mathbf{m}}_{t} \triangleq \mathbf{m}_{t} / (1 - \beta_1^t)$ \Comment{exactly reverse Adam}
            \State $\hat{\mathbf{v}}_{t} \triangleq \mathbf{v}_{t} / (1 - \beta_2^t)$ \Comment{exactly reverse Adam}
            \State $\mathbf{w}_{t-1} = \mathbf{w}_t + \alpha \cdot \hat{\mathbf{m}}_t / (\hat{\mathbf{v}}_t + \epsilon)$ \Comment{exactly reverse Adam}
            \State $\mathbf{g}_t \triangleq \nabla_\mathbf{w} L(\mathbf{w}_{t-1}, \mathbf{x})$ \Comment{exactly reverse Adam}
            \State $\mathbf{m}_{t-1} = [\mathbf{m}_{t} - (1 - \beta_1) \cdot \mathbf{g}_t] / \beta_1$ \Comment{exactly reverse Adam}
            \State $\mathbf{v}_{t-1} = [\mathbf{v}_{t} - (1 - \beta_2) \cdot \mathbf{g}_t^2] / \beta_2$ \Comment{exactly reverse Adam}
            
            \State $\epsilon' \triangleq \epsilon \cdot \sqrt{1 - \beta_2^t}$
            \State $\alpha' \triangleq \alpha \cdot \sqrt{1 - \beta_2^t} ~/~ (1 - \beta_1^t)$
            \State $\beta' \triangleq (1 - \beta_2) ~/~ (1 - \beta_1)$
            \State $d\mathbf{m} = d\mathbf{m} + \alpha' \cdot \left( \frac{ \beta' \cdot \mathbf{m}_t \cdot \mathbf{g}_t }{\sqrt{\mathbf{v}_t} \cdot (\sqrt{\mathbf{v}_t} + \epsilon')^2} - \frac{1}{\sqrt{\mathbf{v}_t} + \epsilon'} \right) \cdot d\mathbf{w}$ \label{alg:rev_adam:main_line} \Comment{Proposition \ref{thm:rev_correctness}}
            \State $d\mathbf{w} = d\mathbf{w} - (1 - \beta_1) \cdot d\mathbf{m} \cdot \nabla_\mathbf{w} \nabla_\mathbf{w} L(\mathbf{w}_{t-1}, \mathbf{x})$ \Comment{Hessian-vector product}
            \State $d\mathbf{x} = d\mathbf{x} - (1 - \beta_1) \cdot d\mathbf{m} \cdot \nabla_{\mathbf{x}} \nabla_\mathbf{w} L(\mathbf{w}_{t-1}, \mathbf{x})$ \Comment{Hessian-vector product}
            \State $d\mathbf{m} = \beta_1 \cdot d\mathbf{m}$
        \EndFor
        \State \textbf{Output:} gradient of $f(\mathbf{w}_T)$ \wrt $\mathbf{w}_0$, $\mathbf{m}_0$, and $\mathbf{x}$
    \end{algorithmic}
\end{algorithm}

\vspace{0.2cm}
\begin{proposition} \label{thm:rev_correctness}
Correctness of \cref{alg:rev_adam}, \cref{alg:rev_adam:main_line}
: see \cref{proof:rev_correctness} for the proof.
\end{proposition}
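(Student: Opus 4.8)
The plan is to reconstruct the exact reverse-mode (adjoint) recursion for a single inner Adam step, put that step into the ``debiased'' form already used in \cref{alg:rev_adam}, and then read off the co-state update that feeds gradient back through $\mathbf{g}_t$; \cref{alg:rev_adam:main_line} should be precisely this contribution to $d\mathbf{m}$. First I would rewrite one forward Adam step: from $\mathbf{w}_t = \mathbf{w}_{t-1} - \alpha\,\hat{\mathbf{m}}_t/(\sqrt{\hat{\mathbf{v}}_t}+\epsilon)$ with $\hat{\mathbf{m}}_t=\mathbf{m}_t/(1-\beta_1^t)$ and $\hat{\mathbf{v}}_t=\mathbf{v}_t/(1-\beta_2^t)$, multiplying numerator and denominator by $\sqrt{1-\beta_2^t}$ yields $\mathbf{w}_t=\mathbf{w}_{t-1}-\alpha'\,\mathbf{m}_t/(\sqrt{\mathbf{v}_t}+\epsilon')$ with exactly the $\alpha'$, $\epsilon'$ of lines 10--11. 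I would then record the elementary (diagonal) Jacobians that feed the adjoint: from the moment recursions, $\partial\mathbf{m}_t/\partial\mathbf{g}_t=(1-\beta_1)\mathbf{I}$ and $\partial\mathbf{v}_t/\partial\mathbf{g}_t=2(1-\beta_2)\operatorname{diag}(\mathbf{g}_t)$ (holding $\mathbf{w}_{t-1},\mathbf{m}_{t-1},\mathbf{v}_{t-1}$ fixed), and from the debiased step the two coordinatewise derivatives $\partial\mathbf{w}_t/\partial\mathbf{m}_t$ and $\partial\mathbf{w}_t/\partial\mathbf{v}_t$.

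The heart of the argument is the product/quotient rule describing how the single gradient $\mathbf{g}_t$ reaches $\mathbf{w}_t$ through \emph{both} moments at once --- the coupling that distinguishes Adam from plain momentum SGD and that produces the first term of \cref{alg:rev_adam:main_line}. Coordinatewise,
\[
  \frac{\partial}{\partial \mathbf{g}_t}\!\left(\frac{\mathbf{m}_t}{\sqrt{\mathbf{v}_t}+\epsilon'}\right) = \frac{1-\beta_1}{\sqrt{\mathbf{v}_t}+\epsilon'} - \frac{(1-\beta_2)\,\mathbf{m}_t\,\mathbf{g}_t}{\sqrt{\mathbf{v}_t}\,(\sqrt{\mathbf{v}_t}+\epsilon')^2},
\]
where the $\tfrac12$ from differentiating $\sqrt{\mathbf{v}_t}$ cancels the $2$ from differentiating $\mathbf{g}_t^{2}$. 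Multiplying by $-\alpha'$ and pulling out $(1-\beta_1)$ rewrites this as $\partial\mathbf{w}_t/\partial\mathbf{g}_t=(1-\beta_1)\,\alpha'\big(\beta'\mathbf{m}_t\mathbf{g}_t/(\sqrt{\mathbf{v}_t}(\sqrt{\mathbf{v}_t}+\epsilon')^2)-1/(\sqrt{\mathbf{v}_t}+\epsilon')\big)$ with $\beta'=(1-\beta_2)/(1-\beta_1)$ as in line 12. The last step is to connect this to the co-state bookkeeping: since $\partial\mathbf{m}_t/\partial\mathbf{g}_t=(1-\beta_1)\mathbf{I}$, the running gradient $\partial f/\partial\mathbf{g}_t$ is stored implicitly as $(1-\beta_1)\,d\mathbf{m}$ --- which is exactly the combination that lines 14--15 contract with the Hessian--vector products $\nabla_\mathbf{w}\nabla_\mathbf{w} L$ and $\nabla_\mathbf{x}\nabla_\mathbf{w} L$ to update $d\mathbf{w}$ and $d\mathbf{x}$ --- so dividing the step-$t$ contribution $(\partial\mathbf{w}_t/\partial\mathbf{g}_t)\odot d\mathbf{w}$ by $(1-\beta_1)$ gives precisely the increment of \cref{alg:rev_adam:main_line}. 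An induction on $t=T,\dots,1$, in which line 16's multiplication by $\beta_1$ realizes $\partial\mathbf{m}_t/\partial\mathbf{m}_{t-1}=\beta_1\mathbf{I}$ and carries the accumulated $d\mathbf{m}$ into the next iteration, packages this into a statement about the whole loop.

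I expect the main obstacle to be careful bookkeeping rather than any single hard estimate. One has to separate cleanly the \emph{within-step} influence of $\mathbf{g}_t$ on $\mathbf{w}_t$ (what \cref{alg:rev_adam:main_line} accounts for) from the influence \emph{carried forward} through the momentum buffer across iterations (what the pre-existing value of $d\mathbf{m}$ and the $\beta_1$-rescaling of line 16 account for), and to be explicit that \cref{alg:rev_adam} deliberately truncates the second-moment co-state: it does not backpropagate $\partial f/\partial\mathbf{v}_t$ through $\mathbf{v}_{t+1},\mathbf{v}_{t+2},\dots$, keeping only the within-step term $\partial\mathbf{w}_t/\partial\mathbf{v}_t\cdot\partial\mathbf{v}_t/\partial\mathbf{g}_t$ --- this is exactly what makes the memory footprint independent of $T$. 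Under that truncation \cref{alg:rev_adam:main_line} is the \emph{exact} derivative of the debiased step, so the proposition reduces to the coordinatewise calculus above plus the observation that every division, square and square root in lines 10--13 acts elementwise, so all intermediate Jacobians are diagonal and the only genuinely matrix-valued objects are the two HVPs; writing the derivation with $\operatorname{diag}(\cdot)$ and $\odot$ keeps the cancellations (in particular the $\tfrac12$ versus $2$) transparent and guards against stray constant factors.
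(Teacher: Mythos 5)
Your derivation is correct and is essentially the paper's own proof: both rewrite the debiased update as $\mathbf{w}_t=\mathbf{w}_{t-1}-\alpha'\,\mathbf{m}_t/(\sqrt{\mathbf{v}_t}+\epsilon')$ and apply the coordinatewise quotient rule, and your quantity $\tfrac{1}{1-\beta_1}\,\partial\mathbf{w}_t/\partial\mathbf{g}_t$ is identical to the paper's $\partial\mathbf{w}_t/\partial\mathbf{m}_t$ obtained via $\partial\mathbf{v}_t/\partial\mathbf{m}_t=(\partial\mathbf{v}_t/\partial\mathbf{g}_t)/(\partial\mathbf{m}_t/\partial\mathbf{g}_t)=2\beta'\mathbf{g}_t$, yielding exactly the increment in \cref{alg:rev_adam:main_line}. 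Your explicit observation that the algorithm truncates the second-moment co-state (no $d\mathbf{v}$ carried across iterations) is a correct clarification that the paper leaves implicit in calling the scheme an approximation.
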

\cref{alg:rev_adam} allows the memory footprint of the meta-gradient computation to be constant \wrt the number of inner-loop steps. Notably, meta-gradient computation is the biggest contributor in a meta-learning algorithm's overall scalability. This is in stark contrast with typical autograd libraries like PyTorch \citep{pytorch}, \texttt{JAX} \citep{jax}, \etc which require storing all intermediate variables across the inner-optimization to compute the meta-gradient, resulting in a linearly growing memory footprint \wrt the number of inner-loop steps.

\sampler also improves the sample-efficiency of the underlying meta-matching framework (\cref{eqn:naive_meta_matching}) by leveraging access to a limited number of training trajectories on the target dataset. Formally, let $\Omega \triangleq \{ [ \theta_i ]_{i=1}^T ~|~ \theta_0 \sim \Theta \}$ be the set of episodic checkpoints of training $\Phi_\theta$ on \dataset for a limited number of random initializations. \sampler leverages $\Omega$ in its final optimization as follows: 
\begin{align} \label{eqn:farzi_opt}
\begin{split}
    \argmin{\decoder, \latentsampled} \expectation{\theta_0 \sim \Omega}{\mathcal{L}_{\dataset}(\theta_T)} 
    \hspace{0.4cm} \text{s.t.} \hspace{0.4cm}
    \theta_{t+1} &\leftarrow \operatorname{Adam}\left(\theta_t, \nabla_{\theta}\mathcal{L}_{\sampled}(\theta_t)\right) 
    \\
    \sampled &\leftarrow \operatorname{softmax}\left(\latentsampled \cdot \decoder \ / \ \tau\right) ~~,
\end{split}
\end{align}
where $\operatorname{Adam}(\cdot, \cdot)$ represents the set of Adam update equations listed in \cref{appendix:algorithms}, and $T$ represents the number of inner-loop optimization steps for each outer-loop step. Notably, curating $\Omega$ is independent of the \dd procedure and can be precomputed and logged beforehand, contributing nothing to the computational complexity of \sampler.


\paragraph{Computational complexity.}
We elucidate \sampler's computational footprint of optimizing \cref{eqn:farzi_opt} in terms of a single outer-loop step's runtime and memory usage:

\vspace{0.5cm}
\begin{align*}
    \operatorname{Memory\ Complexity:} & 
    \hspace{0.3cm}
    \mathcal{O}\big(
    \eqnmarkbox[DodgerBlue]{a}{|\Phi|}
    +
    \eqnmarkbox[OliveGreen]{b}{b\cdot\dim(\vocab)}
    + 
    \eqnmarkbox[WildStrawberry]{c}{b_{\mathsf{syn}}\cdot\xi\cdot\dim(\vocab)}
    +
    \eqnmarkbox[RoyalPurple]{d}{\mu \cdot \xi \cdot d}
    +
    \eqnmarkbox[BrickRed]{e}{d \cdot \dim(\vocab)}
    \big) \\[0.8cm]
    \operatorname{Time\ Complexity:} & 
    \hspace{0.3cm}
    \mathcal{O}\big(
    \eqnmarkbox[DodgerBlue]{h}{b_{\mathsf{syn}}\cdot\xi\cdot d\cdot\dim(\vocab)}
    +
    \eqnmarkbox[OliveGreen]{f}{T \cdot b_{\mathsf{syn}} \cdot |\Phi|}
    +
    \eqnmarkbox[WildStrawberry]{g}{b \cdot |\Phi|}
    + 
    \\
    & \hspace{0.75cm}
    \eqnmarkbox[RoyalPurple]{i}{T} \cdot ( 
        \eqnmarkbox[BrickRed]{j}{b_{\mathsf{syn}} \cdot |\Phi|}
        +  
        \eqnmarkbox[BurntOrange]{k}{b_{\mathsf{syn}}\cdot\xi\cdot d}
        +
        \eqnmarkbox[teal]{l}{d \cdot \dim(\vocab)}
    )
    \big)
\end{align*}
\annotate[yshift=0.7em,xshift=-4ex]{above,right}{a}{Model optimization}
\annotate[yshift=0.7em,xshift=-0.4ex]{above,right}{b}{Storing $\hat{\dataset}$}
\annotate[yshift=0.7em,xshift=0.6ex]{above,left}{c}{Storing $\hat{\mathcal{D}}_{\mathsf{syn}}$}
\annotate[yshift=0.7em,xshift=2ex]{above,left}{d}{Storing \& Updating $\Tilde{\mathcal{D}}_{\mathsf{syn}}$}
\annotate[yshift=0.7em,xshift=4ex]{above,left}{e}{Storing \& Updating $\mathbf{M}$}
\annotate[yshift=0.7em,xshift=-7ex]{above,right}{f}{Inner-loop optimization}
\annotate[yshift=0.7em,xshift=-4ex]{above,right}{g}{Computing $\nabla_\mathbf{\theta} \mathcal{L}_{\hat{\dataset}}(\theta_T)$}
\annotate[yshift=0.7em,xshift=10ex]{above,left}{h}{Computing $\hat{\mathcal{D}}_{\mathsf{syn}}$ from \sampled \& $\mathbf{M}$}
\annotate[yshift=0.25em,xshift=-2ex]{below,left}{i}{Inner-loop reversal (\cref{alg:rev_adam})}
\annotate[yshift=-0.7em,xshift=0ex]{below,left}{j}{Computing $\nabla_\mathbf{\theta} \mathcal{L}_{\hat{\dataset}_{\mathsf{syn}}}(\theta_t)$}
\annotate[yshift=-0.9em,xshift=0ex]{below,right}{k}{Updating meta-gradient for $\Tilde{\mathcal{D}}_{\mathsf{syn}}$}
\annotate[yshift=0.25em,xshift=7ex]{below,right}{l}{Updating meta-gradient for $\mathbf{M}$}
\vspace{0.85cm}

where, $\hat{\dataset} \sim \dataset$ and $\hat{\mathcal{D}}_{\mathsf{syn}} \sim \sampled$ are randomly sampled batches of real data and \samplerdata such that $b \triangleq |\hat{\dataset}|$ and $b_{\mathsf{syn}} \triangleq |\hat{\mathcal{D}}_{\mathsf{syn}}|$; and $|\Phi|$ represents the total number of parameters in $\Phi$.

\section{Empirical evaluation}

\subsection{Setup} 

We empirically evaluate \sampler's practicality over two well-studied autoregressive predictive tasks: 
\begin{itemize}[leftmargin=.2in]
    \listheader{Sequential Recommendation:} Predict the \emph{item} that a given user is most likely to consume next, given their historic item consumption history. We use four benchmark datasets, namely Movielens-100k, Movielens-1M \citep{movielens}, Amazon Magazine \citep{amazon_review}, and Netflix \citep{netflix_data}; from different recommendation domains and with varying data characteristics. To evaluate model quality we use popular ranking metrics: AUC, HitRate, and nDCG. A detailed description of all datasets and metrics can be found in \cref{appendix:datasets,appendix:metrics}.
    
    \listheader{Language Modeling (LM):} Predict the most probable following word given a sequence of words. We conduct our experiments on the official-released train/validation/test split of the English Penn Treebank (PTB) corpus \citep{marcus1993building}: an open-sourced benchmark widely used for LM. We evaluate our models using word-level perplexity, as well as the token prediction accuracy after greedy decoding on the test set. Further details about the dataset and metrics are described in \cref{appendix:datasets,appendix:metrics}.
\end{itemize}

We use SASRec \citep{sasrec} and a small Transformer model \citep{self_attention} as the representative learning algorithms ($\Phi$) in \sampler's inner-loop for sequential recommendation and language modeling tasks respectively, and use cross-entropy as the underlying objective function for both. We implement \sampler using PyTorch \citep{pytorch} and we will publicly release the code and optimized \samplerdata for all datasets used in this paper upon acceptance. 
We conduct all our experiments on a single RTX 2080-Ti GPU (11 GB), and list all relevant hyper-parameters and further experimental details in \cref{appendix:hyper_params,appendix:additional_details}.

\subsection{Experiments} \label{sec:experiments}

\paragraph{How sample efficient is \samplerdata?} 

We evaluate the fidelity of \samplerdata by first optimizing for \sampled using \cref{eqn:farzi_opt}, followed by training $\Phi$ (from scratch) on \sampled.  We plot the performance of the trained $\Phi$ on the test-set for various amounts of data budgets ($\mu$) in \cref{fig:sample_efficiency,fig:sample_efficiency_lm} for sequential recommendation and language modeling tasks respectively. A tabular version of the same results can be found in \cref{appendix:more_experiments}, \cref{tab:farzi_results}. We also plot semantically equivalent results for other

\begin{figure*}[t!] \centering
    \includegraphics[width=\linewidth]{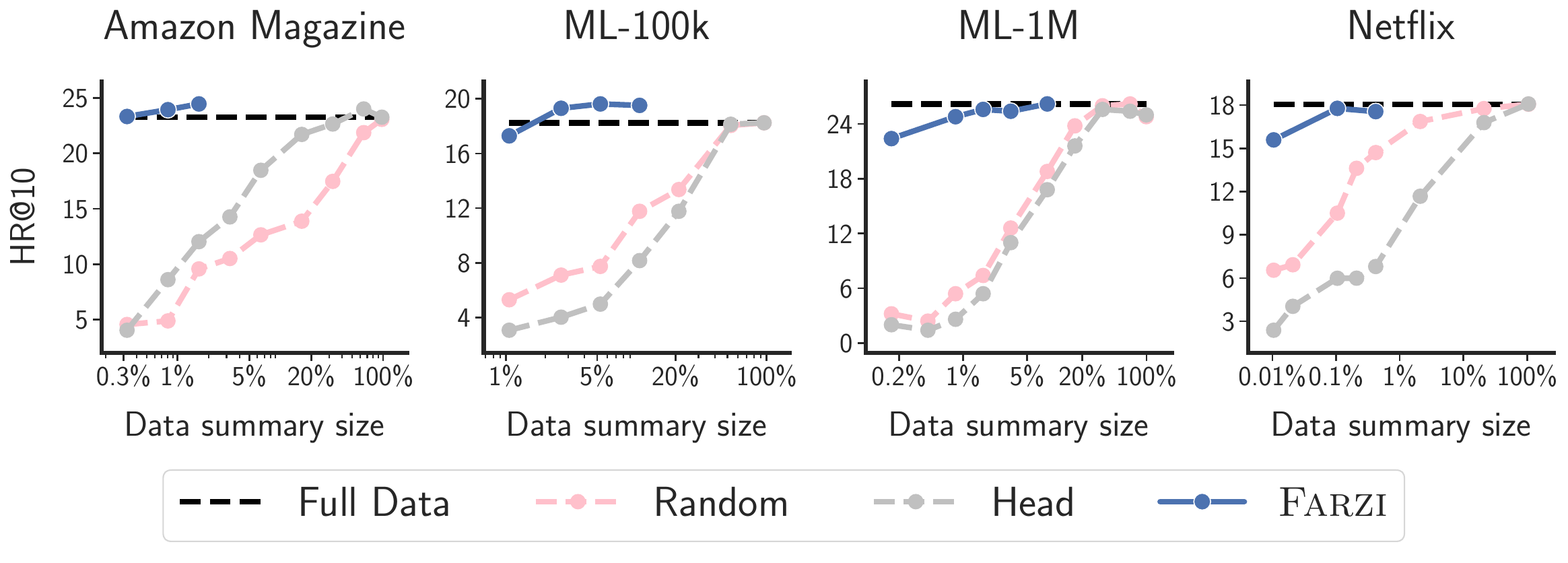}
    \caption{Performance change of SASRec with increasing data (log-scale) for recommendation. For a tabular version of these results see \cref{appendix:more_experiments}, \cref{tab:farzi_results}; and for results on more metrics see \cref{appendix:more_experiments}, \cref{fig:appendix:sample_efficiency}.}
    \label{fig:sample_efficiency}
    \vspace{-0.2cm}
\end{figure*} 

\begin{wrapfigure}{r}{0.34\textwidth}
  \vspace{-0.7cm}
  \begin{center}
    \includegraphics[width=\linewidth]{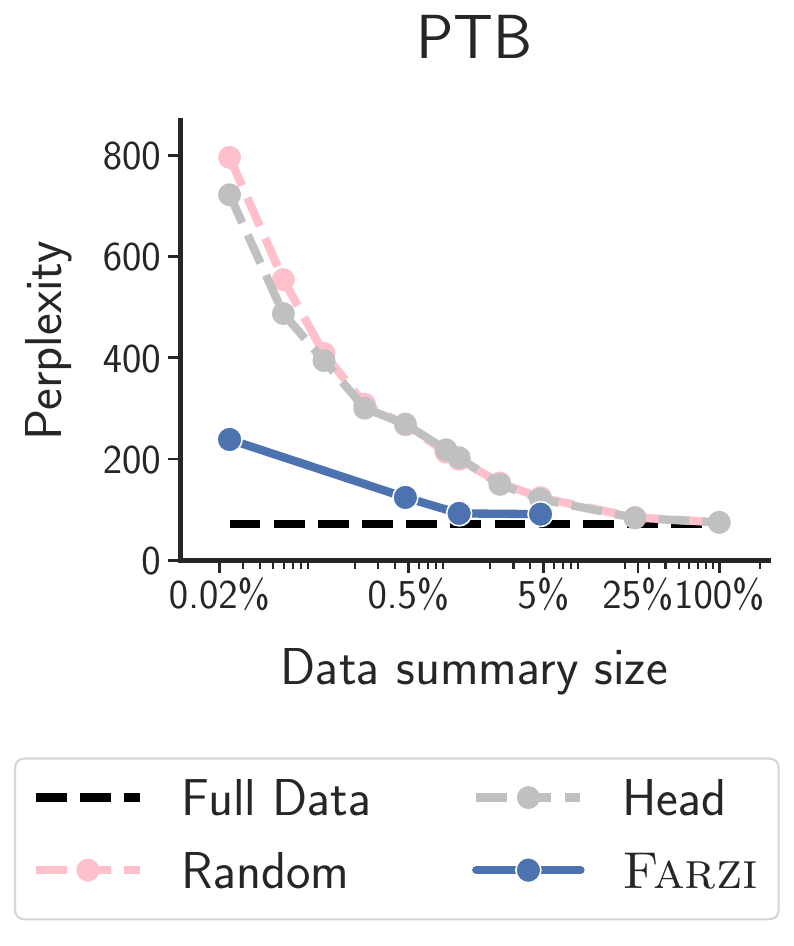}
  \end{center}
  \vspace{-0.3cm}
  \caption{Performance change of Transformer with increasing data for LM.}
  \label{fig:sample_efficiency_lm}
  \vspace{-0.45cm}
\end{wrapfigure}

commonly used data sampling heuristics, namely (i) random sampling: sample sequences uniformly at random, and (ii) head sampling: retain the sequences with the largest length. We first note that \samplerdata is much more sample-efficient than other data sampling techniques, being able to achieve up to $1000\times$ data compression with no loss in performance. 
Further, in \cref{fig:sample_efficiency}, we notice that on two out of the four recommendation datasets, \sampler's orders of magnitude smaller data is able to train models of higher quality than the original dataset itself. This observation acts as further evidence for the intuitive yet under-explored idea that less but high-quality data can be more important for model training than a very large but noisy dataset \citep{nips_22, lima}.


\paragraph{How versatile is \samplerdata?} 
Since \samplerdata is inherently optimized for a specific learning algorithm, we ascertain its universality by training different kinds of \texttt{student networks} over data synthesized using a given \texttt{teacher network} in \sampler's inner-loop for the sequential recommendation task. Note that the \texttt{student network} is completely unrelated to the data synthesis procedure and underlying \sampler optimization. From the results in \cref{tab:cross_arch}, we observe that irrespective of the \texttt{teacher network}, \samplerdata is able to train varied \texttt{student network} architectures (\eg, Transformers, RNNs, MLPs) better than training on the full dataset. 
On the other hand, however, the best performance for any given \texttt{student network} is obtained when the same network is used during \sampler optimization.

\paragraph{How important is the inner-loop optimizer in \sampler?} 
We compare SGD (with or without momentum) and Adam \citep{adam} optimizers as different optimization routines in \sampler's inner-loop (\cref{eqn:farzi_opt}). Notably, we implement differentiable Adam optimization in three different ways: (i) using the \texttt{higher} package \citep{higher}; (ii) PyTorch's autograd implementation; and (iii) our efficient reverse-mode implementation (\cref{alg:rev_adam}). We measure their effect on downstream performance as well as the time and memory associated with each outer-loop iteration in \cref{fig:efficient_adam}. We first observe that Adam is much better suited for \dd in our setting. This is a novel finding in the context of meta-learning and its applications, where previously Adam has been reported to be worse than SGD \citep{higher}. Further, we observe that while different reverse-mode implementations of Adam lead to data of similar sample quality, their computational properties vastly differ. We observe that PyTorch and \texttt{higher} have similar memory footprints, but the former has a lower runtime. Our efficient implementation elegantly trades-off memory with runtime, leading to constant memory footprint and a linear increase in runtime compared to PyTorch's autograd. This allows \sampler to scale to large autoregressive datasets without compromising on data fidelity.

\def\arraystretch{1.1}
\setlength{\tabcolsep}{0.5em} 

\begin{table}[t!] \centering
  \caption{Cross-architecture generalization for \samplerdata of size
  \texttt{[50$\times$150]} of the ML-100k dataset.
  Note that the \texttt{student network} is used \emph{only for training} on \samplerdata, while the \texttt{teacher network} is used in \sampler's inner-loop. 
  Further details about the following sequential recommendation models can be found in \cref{appendix:additional_details}.}
  \label{tab:cross_arch}
    \begin{scriptsize}
    \begin{center}
        \begin{tabular}{c|ccc}
            \toprule
            \multirow{4}{*}{\texttt{Teacher}} & \multicolumn{3}{c}{\texttt{Student}} \\
            & \multicolumn{3}{c}{\texttt{HR@10 / HR@100}} \\
            & SASRec \citep{sasrec} & GRU4Rec \citep{gru4rec} & FMLP \citep{fmlp} \\
            \midrule
            SASRec & $19.61 / 61.50$ & $19.93 / 64.47$ & $17.81 / 58.21$ \\
            
            GRU4Rec & $18.23 / 61.08$ & $22.16 / 66.70$ & $20.14 / 60.23$ \\
            \midrule
            Full-Data & $18.23 / 60.65$ & $21.20 / 64.05$ & $17.28 / 59.27$ \\
            \bottomrule
        \end{tabular}
    \end{center}
    \end{scriptsize}
\end{table}

\begin{figure*}[t!] \centering
    \includegraphics[width=\linewidth]{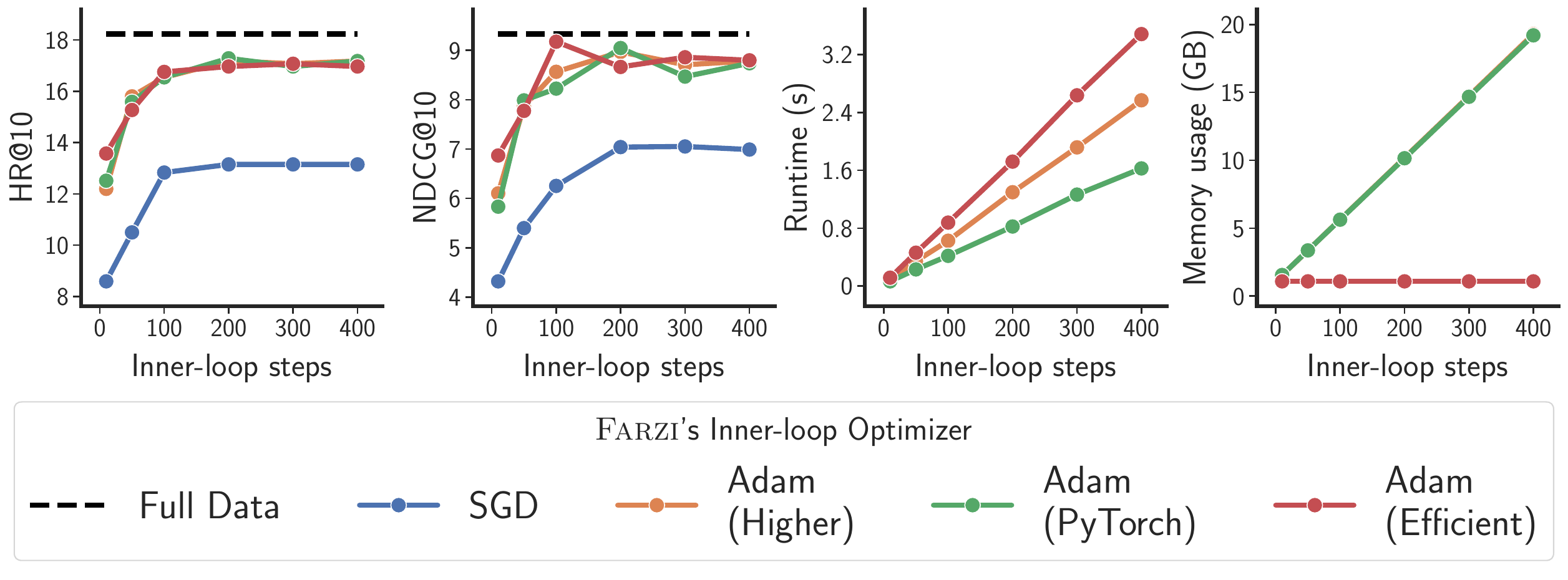}
    \caption{Changes in distillation performance and computational scalability of each outer-loop step for different inner-loop optimizers and increasing number of inner-loop steps. All results are for \texttt{[10$\times$150]} sized \samplerdata of the ML-100k dataset.}
    \label{fig:efficient_adam}
    \vspace{-0.2cm}
\end{figure*} 

\paragraph{How do different meta-objectives affect \sampler?}
We further evaluate the importance of \sampler's optimization objective by comparing it with existing \dd approaches. We adapt existing approaches to work with autoregressive data by reusing the latent distillation proposition of \sampler, and vary only the outer-loop \emph{goodness function} to (i) gradient matching (\texttt{DC} \citep{zhao_dc}); (ii) meta-matching (\texttt{MM} \citep{dd_orig, remember_past}); or (iii) trajectory matching (\texttt{MTT} \citep{mtt}). See the formal definitions for each of these objectives in \cref{appendix:other_dd_objectives}. Even though all existing \dd approaches use SGD in their inner-loop, we nonetheless experiment with both SGD and our efficient reverse-mode Adam (\cref{alg:rev_adam}), and list the results in \cref{tab:meta_objectives}. We observe that Adam is a consistently better inner-loop optimizer irrespective of the meta-objective used. This is in stark contrast with existing \dd studies which use SGD in the inner-loop. Further, \sampler significantly outperforms all existing \dd techniques despite improving them to use Adam in the inner-loop.

\paragraph{How important are pre-trained trajectories for data distillation?} 
To elicit the importance of the pre-trained trajectories, \ie, $\Omega \triangleq \{ [ \theta_i ]_{i=1}^T ~|~ \theta_0 \sim \Theta \}$ in \sampler's optimization (\cref{eqn:farzi_opt}), we plot the change in downstream distillation performance with increasing $|\Omega|$ in \cref{fig:num_trajectory}. We indeed observe a massive improvement in downstream distillation performance with using as little as just $5$ trajectories, compared to randomly initializing networks in \sampler's inner-loop. Notably, the improvement saturates as we keep adding more trajectories to $\Omega$. 

\newcommand{\boformat}[1]{$\mathbf{#1}$}
\newcommand{\orformat}[1]{\textcolor{orange}{$\mathbf{#1}$}}
\newcommand{\blformat}[1]{\textcolor{blue}{$\mathbf{#1}$}}

\def\arraystretch{1.2}
\setlength{\tabcolsep}{0.5em} 

\begin{table}[t!] \centering
    \caption{Comparison of \sampler with other existing \dd techniques modified to distill autoregressive data. Results for both using SGD or Adam as the inner-loop optimizer are listed. Meta-matching is shortened as \texttt{MM}. The best distillation result for each metric is colored \textcolor{orange}{\textbf{orange}}, and the best result other than \sampler is colored \textcolor{blue}{\textbf{blue}} for Adam-based methods and \textbf{emboldened} for SGD-based methods.
    }
    \label{tab:meta_objectives}
    \begin{scriptsize}
    \begin{center}
        \begin{tabular}{c c | c | c@{\hskip 5pt} c c@{\hskip 5pt} c c@{\hskip 5pt} c c | c}
            \toprule
            \multirow{3.8}{*}{\texttt{Dataset}} & \multirow{3.8}{*}{\texttt{Metric}} & \multirow{3.8}{*}{\texttt{\STAB{Random\\Sampling}}} & \multicolumn{7}{c|}{\texttt{Data Distillation Objectives}} & \multirow{3.8}{*}{\texttt{Full-Data}} \\[3pt]
            & & & \multicolumn{2}{c}{\texttt{DC}} & \multicolumn{2}{c}{\texttt{MM}} & \multicolumn{2}{c}{\texttt{MTT}} & \multirow{2}{*}{\sampler} \\[2pt]
            & & & SGD & Adam & SGD & Adam & SGD & Adam & \\
            \midrule
            \multirow{4}{*}{\STAB{ML-100k\\\texttt{[50$\times$150]}}} 
            & HR@10 $\uparrow$     &  $7.74$ &  $7.95$ & $11.77$ &  $9.65$ & \blformat{16.86} & \boformat{12.19} & $14.52$ & \orformat{19.61} & $18.23$ \\
            & HR@100 $\uparrow$    & $39.13$ & $41.88$ & $49.52$ & $42.31$ & \blformat{58.43} & \boformat{50.37} & $56.94$ & \orformat{61.50} & $60.65$ \\
            & nDCG@10 $\uparrow$   &  $3.83$ &  $3.44$ &   $5.6$ &  $4.72$ &  \blformat{8.35} &  \boformat{6.12} &  $6.73$ &  \orformat{9.91} &  $9.33$ \\
            & nDCG@100 $\uparrow$  &  $9.61$ &  $9.84$ & $12.51$ & $10.85$ & \blformat{16.47} & \boformat{13.03} & $14.66$ & \orformat{17.91} & $17.69$ \\
            \midrule
            \multirow{2}{*}{\STAB{PTB\\\texttt{[400$\times$50]}}} 
            & Perplexity $\downarrow$   & 218.66 & 203.23 & 131.07 & \boformat{180.61} & \blformat{115.84} & 202.98 & 129.72 &  \orformat{91.92} & 72.10 \\
            & Accuracy  $\uparrow$         & 20.42 & 20.64 & 22.35 & \boformat{21.60} & \blformat{23.47} & 21.00 & 23.00 & \orformat{25.16} & 26.03 \\
            \bottomrule
        \end{tabular}
    \end{center}
    \end{scriptsize}
\end{table}

\begin{figure*}[t!] \centering
    \captionsetup[subfigure]{skip=-12pt,slc=off,margin={37pt,0pt},font={color=red}}
    \subcaptionbox{\label{fig:cold_user_item}}{\includegraphics[width=0.73\textwidth]{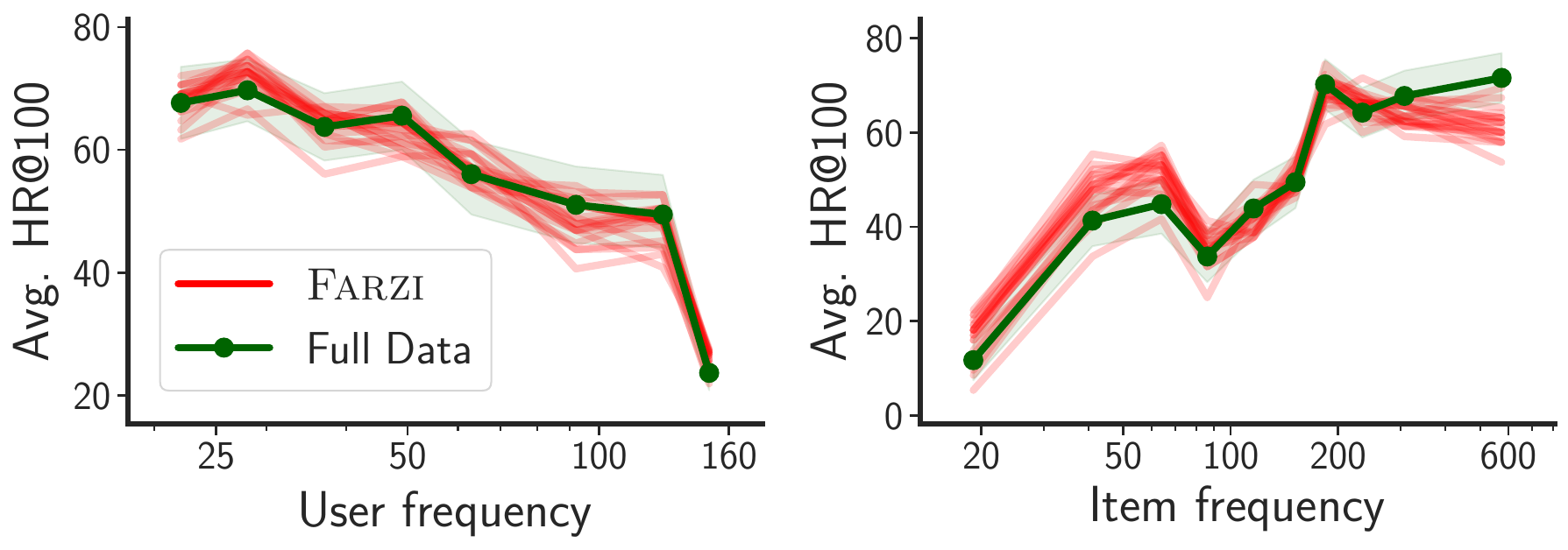}}
    \hfill
    \captionsetup[subfigure]{skip=-12pt,slc=off,margin={15pt,0pt},font={color=red}}
    \subcaptionbox{\label{fig:num_trajectory}}{\includegraphics[width=0.255\textwidth]{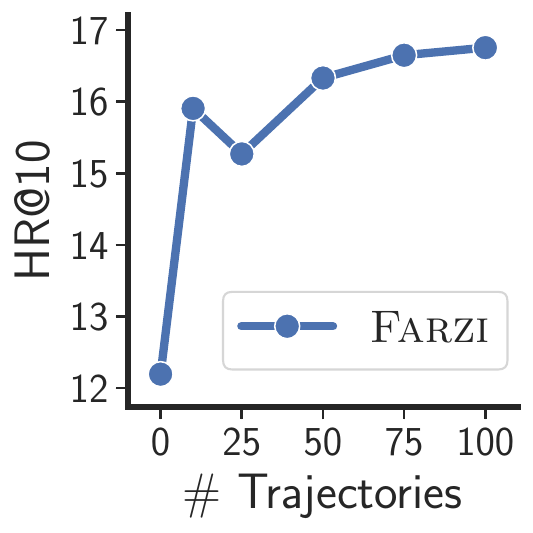}}
    \captionsetup{subrefformat=parens}
    \caption{\subref{fig:cold_user_item} Performance of SASRec trained on \texttt{[50$\times$150]} sized \samplerdata for ML-100k, and stratified over the popularity of users and items. The popularities are quantized into $10$ equal sized bins and the average HR@100 is plotted. For results on more metrics see \cref{appendix:more_experiments}, \cref{fig:appendix:cold_user_item}. \subref{fig:num_trajectory} Performance change of SASRec trained on \texttt{[10$\times$150]} sized \samplerdata for ML-100k with increasing number of pretrained trajectories. For results on more metrics see \cref{appendix:more_experiments}, \cref{fig:appendix:num_trajectory}.}
    \vspace{-0.2cm}
\end{figure*}

\paragraph{Does \sampler affect cold users or items more?} 
A longstanding problem in recommender systems is modeling the cold-start scenario, \ie, users/items with less data. We study the effect training models on \samplerdata from the cold-start perspective, by stratifying the users and items based on their popularity into equal-sized quantiles, and checking the trained model's performance on each individual quantile. In \cref{fig:cold_user_item}, we do this for SASRec \citep{sasrec} trained on (i) the full dataset; and (ii) \samplerdata synthesized using different hyper-parameter combinations. We first observe that less popular items are harder to model, as is the typical case of recommender systems. Further, we observe that models trained on \samplerdata are, in expectation, (i) better on the tail/torso region of users/items; but (ii) worse for the head users/items. Notably, this behaviour is not directly optimized-for by \sampler, and is a by-product of the overall data-quality optimization in \cref{eqn:farzi_opt}. 



\section{Conclusion \& Future Work}

In this paper, we proposed \sampler \ --- a scalable technique to summarize large amounts of autoregressive data into a terse, high-fidelity data summary. Through extensive experiments on next-item recommendation and language modeling, we demonstrated that data synthesized by \sampler (\samplerdata) is able to train various kinds of state-of-the-art models to the same quality (if not better) as training them on the full dataset, despite \samplerdata being up to $3$ orders of magnitude smaller. 

Having demonstrated \samplerdata's prowess to train autoregressive models, we also highlight a few shortcomings and unexplored directions that we delay for future work. First, even though \sampler performs distillation in a latent-space, it is hard to scale to applications that naturally consist of very-long sequences, \eg, video, music, \etc because \samplerdata is parameterized linearly in the length of each sequence. Further, scaling to larger models (\eg, T5 \citep{t5}) as well as larger datasets (\eg, C4 \citep{c4}) isn't as trivial 
due to practical constraints related to optimization and computational resources, but very important from a practical standpoint for future research, such as enabling cost-effective training of these large models on compact synthetic data.

Laying down the foundation for data distillation in autoregressive modeling, \sampler also opens up new research directions from varied angles. First, the ability to train higher quality models using less data is counter-intuitive and under-explored but also highly important from economical and environmental standpoints. Further, training models on differentialy private data summaries \citep{privacy_free} instead of PII data can provide an added protection layer and be beneficial from copyright-protection, ethics, and fairness perspectives. \looseness=-1

\section*{Acknowledgment}
We thank Dougal Maclaurin and Zhiwei Deng for insightful discussions on reverse-mode Adam, and thank Zachary Novack for turning on a lab server at a critical time.


\bibliography{references}
\bibliographystyle{iclr2024_conference}

\newpage
\appendix
\section{Algorithms} \label{appendix:algorithms}

\begin{algorithm}[H]
    \centering
    \caption{Adam optimization \citep{adam}}
    \label{alg:fwd_adam}
    \begin{algorithmic}[1]
        \State \textbf{Input:} initial $\mathbf{w}_0$, decays ($\beta_1$, $\beta_2$), learning rate $\alpha$, constant $\epsilon$, loss function $L(\mathbf{w}, \mathbf{x})$
        \State \textbf{Initialize:} $\mathbf{m}_0 \leftarrow 0$, $\mathbf{v}_0 \leftarrow 0$
        \For{$t = 1$ to $T$}
            \State $\mathbf{g}_t \triangleq \nabla_\mathbf{w} L(\mathbf{w}_{t-1}, \mathbf{x})$ \Comment{evaluate gradient}
            \State $\mathbf{m}_{t} = \beta_1 \cdot \mathbf{m}_{t-1} + (1 - \beta_1) \cdot \mathbf{g}_t$ \Comment{biased first moment estimate}
            \State $\mathbf{v}_{t} = \beta_2 \cdot \mathbf{v}_{t-1} + (1 - \beta_2) \cdot \mathbf{g}_t^2$ \Comment{biased second moment estimate}
            \State $\hat{\mathbf{m}}_{t} \triangleq \mathbf{m}_{t} / (1 - \beta_1^t)$ \Comment{bias-corrected first moment estimate}
            \State $\hat{\mathbf{v}}_{t} \triangleq \mathbf{v}_{t} / (1 - \beta_2^t)$ \Comment{bias-corrected second moment estimate}
            \State $\mathbf{w}_{t} = \mathbf{w}_{t-1} - \alpha \cdot \hat{\mathbf{m}}_t / (\hat{\mathbf{v}}_t + \epsilon)$ \Comment{update parameters}
        \EndFor
        \State \textbf{Output:} trained parameters $\mathbf{w}_T$, biased first moment $\mathbf{m}_T$, biased second moment $\mathbf{v}_T$
    \end{algorithmic}
\end{algorithm}

\section{Proofs} \label{appendix:proofs}

\subsection{Proof of Theorem \ref{thm:implicit_reg}} \label{proof:implicit_reg}
\begin{proof}
We first begin by defining a few preliminary terms and properties:
\begin{definition} \label{def:representativeness}
    {\normalfont \textbf{(Representativeness of $\mathcal{D}$)}} For a given function-class $\mathcal{F}$, task loss function $l(\cdot)$, train-set $\mathcal{D} = \{ x_1, x_2, \ldots, x_n \}$ sampled from the true data distribution $\mathcal{P}^n$:
    \begin{equation*}
        \operatorname{Rep}(\mathcal{F}, \mathcal{D}) \triangleq \underset{f \in \mathcal{F}}
        {\sup}\left( 
        \expectation{x\sim\mathcal{P}}{l(f, x)} - \expectation{x\sim\mathcal{D}}{l(f, x)}
        \right) 
        ~~,
    \end{equation*}
    which intuitively measures the maximum discrepancy between the empirical risk and the true risk for a given training set. Naturally, a lower $\operatorname{Rep}(\mathcal{F}, \cdot)$ avoids overfitting and is desirable.
\end{definition}

\begin{definition} \label{def:radamacher}
    {\normalfont \textbf{(Rademacher complexity)}} For a given given function-class $\mathcal{F}$, and train-set $\mathcal{D} = \{ x_1, x_2, \ldots, x_n \}$:
    \begin{equation*}
        \mathfrak{R}(\mathcal{F}, \mathcal{D}) \triangleq \expectation{\sigma_1, \sigma_2, \ldots, \sigma_n \in \{\pm1\}}{
            \underset{f \in \mathcal{F}}{\sup} ~\frac{\sigma_i \cdot f(x_i)}{n}
        } 
        ~~,
    \end{equation*}
    where $\sigma_1, \sigma_2, \ldots, \sigma_n$ are independent random variables from the Rademacher distribution. $\mathfrak{R}(\mathcal{F}, \mathcal{D})$ intuitively measures the learning capacity of $\mathcal{F}$ by it's ability to fit random label assignments of $\mathcal{D}$.
\end{definition}

\begin{lemma} \label{lemma:rep_radamacher}
    {\normalfont \textbf{(Lemma 26.2 in \citet[Chapter~26]{radamacher_book})}}
    \begin{equation*}
        \expectation{\mathcal{D} \sim \mathcal{P}^n}{\operatorname{Rep}(\mathcal{F}, \mathcal{D})} \leq 2 \expectation{\mathcal{D} \sim \mathcal{P}^n}{\mathfrak{R}(\mathcal{F}, \mathcal{D})}
    \end{equation*}
\end{lemma}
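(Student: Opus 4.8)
The statement is the classical \emph{symmetrization} lemma, and the plan is to prove it via a ghost-sample argument. First I would introduce an independent ghost sample $\mathcal{D}' = \{x_1', \ldots, x_n'\} \sim \mathcal{P}^n$, drawn independently of $\mathcal{D}$ and of the same cardinality. The purpose of the ghost sample is to replace the population risk appearing inside $\operatorname{Rep}$ by an empirical quantity: for every fixed $f \in \mathcal{F}$ we have $\expectation{x \sim \mathcal{P}}{l(f, x)} = \expectation{\mathcal{D}'}{\tfrac{1}{n}\sum_{i=1}^n l(f, x_i')}$, since each $x_i'$ is an independent draw from $\mathcal{P}$. Substituting this identity into \cref{def:representativeness} lets me write $\operatorname{Rep}(\mathcal{F}, \mathcal{D})$ as the supremum over $f$ of the difference between a ghost-sample average and the $\mathcal{D}$-average of $l(f, \cdot)$.

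Next I would move the ghost-sample expectation outside the supremum. Because the supremum of an average is at most the average of the supremum (convexity of $\sup$, i.e.\ Jensen's inequality), taking the outer expectation over $\mathcal{D}$ gives
\[
\expectation{\mathcal{D}}{\operatorname{Rep}(\mathcal{F}, \mathcal{D})} \leq \expectation{\mathcal{D}, \mathcal{D}'}{\sup_{f \in \mathcal{F}} \frac{1}{n}\sum_{i=1}^n \left( l(f, x_i') - l(f, x_i) \right)} ~.
\]

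The heart of the argument is the symmetrization step. Since $x_i$ and $x_i'$ are i.i.d.\ draws from $\mathcal{P}$, the summand $l(f, x_i') - l(f, x_i)$ is symmetric about zero under the swap $x_i \leftrightarrow x_i'$; consequently, multiplying the $i$-th term by an independent Rademacher sign $\sigma_i \in \{\pm 1\}$ leaves the joint distribution of the whole expression unchanged. Inserting these signs for free and then using subadditivity of the supremum, $\sup(a + b) \le \sup a + \sup b$, to split the sum into its two halves yields
\[
\expectation{\mathcal{D}, \mathcal{D}'}{\sup_{f} \frac{1}{n}\sum_{i=1}^n \sigma_i \left( l(f, x_i') - l(f, x_i) \right)} \leq \expectation{\sigma, \mathcal{D}'}{\sup_{f} \frac{1}{n}\sum_{i=1}^n \sigma_i\, l(f, x_i')} + \expectation{\sigma, \mathcal{D}}{\sup_{f} \frac{1}{n}\sum_{i=1}^n (-\sigma_i)\, l(f, x_i)} ~.
\]
Because the Rademacher distribution is symmetric, $-\sigma_i$ has the same law as $\sigma_i$, and because $\mathcal{D}$ and $\mathcal{D}'$ are identically distributed, both terms on the right equal $\expectation{\mathcal{D}}{\mathfrak{R}(\mathcal{F}, \mathcal{D})}$ (reading $\mathcal{F}$ as the loss-composed class of \cref{def:radamacher}). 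Summing the two identical terms produces the factor of $2$ and closes the proof.

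I expect the main obstacle to be justifying the symmetrization step rigorously: one must verify that inserting the signs $\sigma_i$ genuinely preserves the distribution, which hinges on conditioning on $(\mathcal{D}, \mathcal{D}')$, observing that swapping the $i$-th coordinate pair is a measure-preserving involution, and then averaging over all $2^n$ sign patterns. The Jensen step also deserves care, since it is precisely where the inequality (rather than an equality) enters; everything after symmetrization is bookkeeping that relies only on the symmetry of $\{\pm1\}$ and the equidistribution of $\mathcal{D}$ and $\mathcal{D}'$.
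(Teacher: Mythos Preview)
Your proposal is correct and is precisely the classical ghost-sample symmetrization argument. Note, however, that the paper does not supply its own proof of this lemma: it is stated as a citation of Lemma~26.2 in \citet[Chapter~26]{radamacher_book} and used as a black box inside the proof of \cref{thm:implicit_reg}. Your argument is essentially the textbook proof found at that reference, so there is nothing to compare beyond observing that you have reproduced the standard derivation the paper defers to.
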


\begin{lemma} \label{lemma:quadratic_radamacher}
    {\normalfont \textbf{(Theorem 1 in \citet{quadratic_radamacher})}}
    Let $\mathcal{F}_{\lambda}$ be the set of norm-bounded quadratic classifiers:
    \begin{equation*}
        \mathcal{F}_\lambda \triangleq \{ f_{\mathbf{w}} : f_{\mathbf{w}}(x) = x^T\mathbf{w}x ~, ||w|| < \lambda \}
    \end{equation*}
    Then for a training-set $\mathbf{X} \in \mathbb{R}^{n\times d}$:
    \begin{equation*}
        \mathfrak{R}(\mathcal{F}, \mathbf{X}) \stackrel{<}{\sim} \lambda \sqrt{\frac{r(\mathbf{X}^T \mathbf{X}) \log d}{n}} ||\mathbf{X}^T \mathbf{X}||_2 ~~~ \text{s.t.} ~~~ r(\mathbf{X}^T \mathbf{X}) \triangleq \frac{\operatorname{trace}(\mathbf{X}^T \mathbf{X})}{||\mathbf{X}^T \mathbf{X}||_2} ~~,
    \end{equation*}
    where, $r(\cdot)$ represents the intrinsic dimension of a PSD matrix \cite[Chapter~7]{intrinsic_dimension_book}.
\end{lemma}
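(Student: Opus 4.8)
The statement is quoted as an external result (Theorem~1 of \citet{quadratic_radamacher}), so my plan is to reconstruct the route that naturally produces \emph{both} the $\sqrt{\log d}$ factor and the intrinsic dimension $r(\mathbf{X}^T\mathbf{X})$. The first move is to view a quadratic classifier as a \emph{linear} functional of the rank-one lift $x_i x_i^T$: taking $\mathbf{w}$ symmetric without loss of generality, $f_{\mathbf{w}}(x_i) = x_i^T \mathbf{w}\, x_i = \langle \mathbf{w},\, x_i x_i^T\rangle$ under the Frobenius inner product, so the Rademacher complexity of \cref{def:radamacher} collapses to
\[
\mathfrak{R}(\mathcal{F}_\lambda, \mathbf{X}) \;=\; \frac{1}{n}\,\mathbb{E}_{\sigma}\Big[\sup_{\|\mathbf{w}\|\le\lambda}\big\langle \mathbf{w},\; \textstyle\sum_i \sigma_i\, x_i x_i^T\big\rangle\Big] \;=\; \frac{\lambda}{n}\,\mathbb{E}_{\sigma}\,\big\|M_\sigma\big\|_{*}, \qquad M_\sigma \triangleq \sum_{i=1}^{n} \sigma_i\, x_i x_i^T,
\]
where $\|\cdot\|_{*}$ is the norm dual to the one constraining $\mathbf{w}$. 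To surface the spectral norm $\|\mathbf{X}^T\mathbf{X}\|_2$ in the final bound, I would take the constraint on $\mathbf{w}$ to be the nuclear (Schatten-$1$) norm, whose dual is exactly the spectral norm; the problem then reduces to controlling $\mathbb{E}_\sigma\|M_\sigma\|_2$ for a symmetric Rademacher sum of rank-one matrices.

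The crux is a matrix concentration estimate for $M_\sigma$. I would invoke the matrix Khintchine (noncommutative Rademacher) inequality, $\mathbb{E}_\sigma\|\sum_i \sigma_i A_i\|_2 \le \sqrt{2\log(2d)}\,\|\sum_i A_i^2\|_2^{1/2}$ for symmetric $A_i$, which is precisely where the ambient dimension $d$ enters, and only through a logarithm. Specializing to $A_i = x_i x_i^T$ gives $A_i^2 = \|x_i\|^2\, x_i x_i^T$, so the variance proxy is the matrix $\sum_i \|x_i\|^2\, x_i x_i^T$.

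It remains to dominate this variance proxy by the two scalars in the statement. Using the operator ordering $\sum_i \|x_i\|^2\, x_i x_i^T \preceq (\max_i \|x_i\|^2)\,\mathbf{X}^T\mathbf{X}$ together with the crude bound $\max_i\|x_i\|^2 \le \sum_i\|x_i\|^2 = \operatorname{trace}(\mathbf{X}^T\mathbf{X})$ gives $\big\|\sum_i \|x_i\|^2 x_i x_i^T\big\|_2 \le \operatorname{trace}(\mathbf{X}^T\mathbf{X})\,\|\mathbf{X}^T\mathbf{X}\|_2 = r(\mathbf{X}^T\mathbf{X})\,\|\mathbf{X}^T\mathbf{X}\|_2^2$, where the last equality is just the definition of $r(\cdot)$. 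Taking the square root and folding back into the first paragraph yields $\mathfrak{R}(\mathcal{F}_\lambda,\mathbf{X}) \lesssim \lambda\,\sqrt{r(\mathbf{X}^T\mathbf{X})\,\log d}\;\|\mathbf{X}^T\mathbf{X}\|_2 / n$, which implies the claimed $\stackrel{<}{\sim}$ bound; indeed this route even produces the sharper $1/n$ in place of $1/\sqrt{n}$, so any residual gap is only a normalization convention for $\mathfrak{R}$ or for the Gram matrix.

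The main obstacle is the matrix Khintchine step. Because $M_\sigma$ is a sum of \emph{non-commuting} random matrices, its spectral norm cannot be controlled by the naive second-moment computation that suffices in the scalar case; it instead requires the noncommutative moment method (equivalently, a matrix Laplace-transform / Lieb-concavity argument), and this is exactly what is responsible for the logarithmic dimension factor. I would cite this inequality as a standard black box and devote the exposition to the two steps that are genuinely specific to quadratic classifiers: the rank-one lift with spectral/nuclear duality, and the variance simplification that turns $\sum_i \|x_i\|^2 x_i x_i^T$ into the product $\operatorname{trace}(\mathbf{X}^T\mathbf{X})\cdot\|\mathbf{X}^T\mathbf{X}\|_2$ and thereby makes the intrinsic dimension appear.
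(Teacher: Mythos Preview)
The paper does not prove this lemma at all: it is quoted verbatim as Theorem~1 of \citet{quadratic_radamacher} and used as a black box in the proof of \cref{thm:implicit_reg}, so there is no in-paper argument to compare your proposal against. Your reconstruction is the standard and correct route for this type of bound---the rank-one lift $f_{\mathbf{w}}(x_i)=\langle \mathbf{w}, x_ix_i^T\rangle$, dualizing the norm constraint, and then matrix Khintchine / matrix Rademacher concentration on $\sum_i \sigma_i x_ix_i^T$ is exactly how the cited result is obtained, and your variance computation $\big\|\sum_i \|x_i\|^2 x_ix_i^T\big\|_2 \le \operatorname{trace}(\mathbf{X}^T\mathbf{X})\,\|\mathbf{X}^T\mathbf{X}\|_2$ is the step that makes the intrinsic dimension appear. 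Your observation that the argument actually delivers $1/n$ rather than $1/\sqrt n$ is also correct; the discrepancy with the stated form is a normalization artifact (empirical vs.\ unnormalized Gram matrix) and does not affect how the lemma is consumed downstream, since the proof of \cref{thm:implicit_reg} only uses the monotone dependence on $r(\mathbf{X}^T\mathbf{X})$.
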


Now we're ready to prove \cref{thm:implicit_reg}. In our case, given the norm-bounded quadratic function-class $\mathcal{F}_\lambda$ and $\mathcal{D}_{\mathsf{naive}} \in \mathbb{R}^{\mu\times\xi\times\dim(\vocab)}$ such that w.l.o.g $||\mathcal{D}_{\mathsf{naive}}||_2 = 1$:

\begin{align}
     \expectation{\mathcal{D}_{\mathsf{naive}} \sim \mathbb{R}^{\mu\times\xi\times\dim(\vocab)}}{\operatorname{Rep}(\mathcal{F}_\lambda, \mathcal{D}_{\mathsf{naive}})} 
    & \leq 2 \expectation{\mathcal{D}_{\mathsf{naive}} \sim \mathbb{R}^{\mu\times\xi\times\dim(\vocab)}}{\mathfrak{R}(\mathcal{F}, \mathcal{D}_{\mathsf{naive}})} \tag{\cref{lemma:rep_radamacher}} \\
    & \stackrel{<}{\sim} \expectation{\mathcal{D}_{\mathsf{naive}} \sim \mathbb{R}^{\mu\times\xi\times\dim(\vocab)}}{
        2\lambda \sqrt{\frac{r(\mathcal{D}_{\mathsf{naive}}^T \mathcal{D}_{\mathsf{naive}}) \log(\xi\dim(\vocab))}{\mu}}
    } \tag{\cref{lemma:quadratic_radamacher}}
\end{align}

Furthermore, the intrinsic dimension of a PSD matrix $A$ obeys:
\begin{align}
    r(A) &\triangleq \frac{\operatorname{trace}(A)}{||A||_2}
    = \frac{\sum_{i} \lambda_{i}(A)}{\lambda_{\mathsf{max}}(A)} \notag \\
    &\leq \frac{\lambda_{\mathsf{max}}(A) \cdot \operatorname{rank}(A)}{\lambda_{\mathsf{max}}(A)} \notag \\
    &\leq \operatorname{rank}(A) \notag
\end{align}

where, the first line uses the alternate trace definition, and norm-eigenvalue equivalence. Combining the two findings:
\begin{align}
    \expectation{\mathcal{D}_{\mathsf{naive}} \sim \mathbb{R}^{\mu\times\xi\times\dim(\vocab)}}{\operatorname{Rep}(\mathcal{F}_\lambda, \mathcal{D}_{\mathsf{naive}})} & \stackrel{<}{\sim} \expectation{\mathcal{D}_{\mathsf{naive}} \sim \mathbb{R}^{\mu\times\xi\times\dim(\vocab)}}{
        2\lambda \sqrt{\frac{\log(\xi\dim(\vocab))}{\mu}} \sqrt{\operatorname{rank}(\mathcal{D}_{\mathsf{naive}}^T \mathcal{D}_{\mathsf{naive}})}
    } \notag \\
    & \stackrel{<}{\sim} 
    2\lambda \sqrt{\frac{\log(\xi\dim(\vocab))}{\mu}}
    \expectation{\mathcal{D}_{\mathsf{naive}} \sim \mathbb{R}^{\mu\times\xi\times\dim(\vocab)}}{
        \sqrt{\operatorname{rank}(\mathcal{D}_{\mathsf{naive}})}
    } \notag \\ 
    & \stackrel{<}{\sim} 
    2\lambda \sqrt{\frac{\log(\xi\dim(\vocab))}{\mu}}
    \cdot \min(\sqrt{\mu}, \sqrt{\xi\cdot\dim(\vocab)})
    \label{main_result_naive}
\end{align}

On the other hand, for the non-parameterized formulation of \samplerdata:
\begin{align}
    \expectation{\substack{\latentsampled \sim \mathbb{R}^{\mu\times\xi\times d}\\\decoder \sim \mathbb{R}^{d \times \dim(\vocab)}}}
    {\operatorname{Rep}(\mathcal{F}_\lambda, \latentsampled \cdot \decoder)} 
    & \stackrel{<}{\sim} 
    2\lambda \sqrt{\frac{\log(\xi\dim(\vocab))}{\mu}}
    \expectation{\substack{\latentsampled \sim \mathbb{R}^{\mu\times\xi\times d}\\\decoder \sim \mathbb{R}^{d \times \dim(\vocab)}}}{
        \sqrt{\operatorname{rank}(\latentsampled \cdot \decoder)}} \notag \\
    & \stackrel{<}{\sim} 
    2\lambda \sqrt{\frac{\log(\xi\dim(\vocab))}{\mu}} \cdot \sqrt{d} \label{main_result}
\end{align}

Finally, comparing \cref{main_result_naive,main_result}:
\begin{align*}
    \expectation{\latentsampled, \decoder}{\operatorname{Rep}(\mathcal{F}, \latentsampled \cdot \decoder)} ~<~ \expectation{\mathcal{D}_{\mathsf{naive}}}{\operatorname{Rep}(\mathcal{F}, \mathcal{D}_{\mathsf{naive}})} 
    ~~~ \text{if} ~~~
    d < \min(\mu, \xi\cdot\dim(\vocab))
\end{align*}

\end{proof}

\subsection{Proof of Proposition \ref{thm:rev_correctness}} \label{proof:rev_correctness}
\begin{proof}
Using the chain rule of derivatives:
\begin{align*}
    \frac{df(\mathbf{w}_T)}{d\mathbf{m}_0} \triangleq d\mathbf{m} &= d\mathbf{m} + \frac{\partial w_t}{\partial m_t} \cdot d\mathbf{w} \\
    &= d\mathbf{m} -\frac{\alpha}{1 - \beta_1^t} \left( \frac{\left[ \sqrt{\hat{\mathbf{v}}_t} + \epsilon \right] - \left[ \mathbf{m}_t \cdot \left( \frac{\partial \mathbf{v}_t / \partial \mathbf{m}_t}{2 \cdot \sqrt{\hat{\mathbf{v}}_t} \cdot (1 - \beta_2^t)} \right) \right]}{(\sqrt{\hat{\mathbf{v}}_t} + \epsilon)^2} \right) \cdot d\mathbf{w} \\
    &= d\mathbf{m} + \alpha' \cdot \left( \frac{\left[ \mathbf{m}_t \cdot \left( \frac{\partial \mathbf{v}_t / \partial \mathbf{m}_t}{2 \cdot \sqrt{\mathbf{v}_t}} \right) \right]}{(\sqrt{\mathbf{v}_t} + \epsilon')^2} - \frac{1}{\sqrt{\mathbf{v}_t} + \epsilon'} \right) \cdot d\mathbf{w} ~~~ ,
\end{align*}
where, ~ $\epsilon' \triangleq \epsilon \cdot \sqrt{1 - \beta_2^t}$ ~and~ $\alpha' \triangleq \frac{\alpha \cdot \sqrt{1 - \beta_2^t}}{1 - \beta_1^t}$. 

Using the chain rule again:
\begin{equation*}
    \frac{\partial \mathbf{v}_t}{\partial \mathbf{m}_t} ~=~ \frac{\partial \mathbf{v}_t / \partial \mathbf{g}_t}{\partial \mathbf{m}_t / \partial \mathbf{g}_t} ~=~ \frac{2 \cdot (1 - \beta_2) \cdot \mathbf{g}_t}{(1 - \beta_1)} = 2 \cdot \beta' \cdot \mathbf{g}_t ~~~ ,
\end{equation*}
where, ~ $\beta' \triangleq \frac{1 - \beta_2}{1 - \beta_1}$, ~ leading to finally:
\begin{align*}
    \frac{df(\mathbf{w}_T)}{d\mathbf{m}_0} \triangleq d\mathbf{m} &= d\mathbf{m} + \alpha' \cdot \left( \frac{\left[ \mathbf{m}_t \cdot \left( \frac{\partial \mathbf{v}_t / \partial \mathbf{m}_t}{2 \cdot \sqrt{\mathbf{v}_t}} \right) \right]}{(\sqrt{\mathbf{v}_t} + \epsilon')^2} - \frac{1}{\sqrt{\mathbf{v}_t} + \epsilon'} \right) \cdot d\mathbf{w} \\
    &= d\mathbf{m} + \alpha' \cdot \left( \frac{\beta' \cdot \mathbf{m}_t \cdot \mathbf{g}_t}{\sqrt{\mathbf{v}_t} \cdot (\sqrt{\mathbf{v}_t} + \epsilon')^2} - \frac{1}{\sqrt{\mathbf{v}_t} + \epsilon'} \right) \cdot d\mathbf{w}
\end{align*}
\end{proof}



\section{Experimental Details} \label{appendix:exp_details}

\subsection{Metrics} \label{appendix:metrics}

We present a formal definition of all metrics used in this paper for both sequential recommendation and language modeling tasks.

\paragraph{Sequential Recommendation.} We start by outlining some notation for defining the metrics. Let the set of users in the test-set be denoted by $\mathcal{U}$ and the set of all items be denoted by $\mathcal{I}$. For each user $u \in \mathcal{U}$, we denote its set of positive interactions $\mathcal{I}_u^+ \subseteq \mathcal{I}$, and similarly define its set of negative interactions $\mathcal{I}_u^- \triangleq \mathcal{I} \backslash \mathcal{I}_u^+$. We now define the metrics for evaluating the quality of a recommender system $\Phi : \mathcal{U} \mapsto \mathcal{I}^k$ which generates a set of $k$ item recommendations, as follows:

\begin{itemize}[leftmargin=.3in]
    \item \textbf{AUC:} Intuitively defined as a threshold independent classification performance measure, AUC can also be interpreted as the expected probability of a recommender system ranking a positive item over a negative item for any given user. More formally, let $\Phi$'s underlying relevance predictor be $\Phi_{\mathsf{logit}} : \mathcal{U}\times\mathcal{I} \mapsto \mathbb{R}$, then the AUC for $\Phi$ is defined as:
    
    \begin{equation*}
        \operatorname{AUC}(\Phi) \triangleq \expectation{u \sim \mathcal{U}}{\expectation{i^+ \sim \mathcal{I}_u^+}{\expectation{i^- \sim \mathcal{I}_u^-}{\Phi_{\mathsf{logit}}(u, i^+) > \Phi_{\mathsf{logit}}(u, i^-)}}}
    \end{equation*}
    
    \item \textbf{HitRate (HR@k):} Also termed as Recall@k; HR@k estimates how many positive items are predicted in $\Phi$'s top-k recommendation list. More formally, the HR@k for $\Phi$ is defined as:
    
    \begin{equation*}
        \operatorname{HR@k}(\Phi) \triangleq \expectation{u \sim \mathcal{U}}{\frac{|\Phi(u) \cap \mathcal{I}_u^+|}{|\mathcal{I}_u^+|}}
    \end{equation*}
    
    \item \textbf{Normalized Discounted Cumulative Gain (nDCG@k):} Unlike HR@k which gives equal importance to all items in the recommendation list, the nDCG@k metric instead gives a higher importance to items predicted higher in the recommendation list and performs logarithmic discounting further down. More formally, let $\operatorname{index}(i, \Phi(u))$ denote the index of item $i$ in the \emph{sorted} recommendation list $\Phi(u)$, then the nDCG@k for $\Phi$ is defined as:
    
    \begin{gather*}
        \operatorname{nDCG@k}(\Phi) \triangleq \expectation{u \sim \mathcal{U}}{\frac{\operatorname{DCG}_u(\Phi)}{\operatorname{IDCG}_u}} \\
        \operatorname{DCG}_u(\Phi) \triangleq \sum_{i \in \mathcal{I}_u^+} \frac{i \in \Phi(u)}{\log_2\big(\operatorname{index}(i, \Phi(u))+1\big)} 
        ~~~;~~~ 
        \operatorname{IDCG}_u \triangleq \sum_{i=1}^{|\mathcal{I}_u^+|} \frac{1}{\log_2(i+1)}
    \end{gather*}
\end{itemize}

\paragraph{Language Modeling.} We first use Perplexity (PPL) to evaluate language modeling performance. Perplexity quantifies how uncertain the model is when trying to predict the next word in a sequence, given the previous words. Given a sentence $\vec{x_i}$, which is tokenized into a sequence of tokens $[w_1, w_2, \cdots, w_{|\vec{x_i}|}]$, the sentence PPL is defined as:
\begin{equation*}
    \log_2{(\operatorname{PPL}_i)} \triangleq -\frac{1}{|\vec{x_i}|}\sum_{i}^{|\vec{x_i}|} \log_2 P(w_i|w_1, w_2, \cdots, w_{i-1})
\end{equation*}
where $P$ is the probability assigned by the language model to $w_i$ given the context of the previous words. Then, given a corpus $\mathcal{C}$ containing $N$ sentences $\mathcal{C}=\{\vec{x_1}, \vec{x_2}, \cdots, \vec{x_N}\}$, the perplexity over $\mathcal{C}$ is defined as the average $\operatorname{PPL}$ over the sentence PPLs:
\begin{equation*}
    \operatorname{PPL}_{\mathcal{C}} \triangleq \frac{1}{N} \cdot \sum_i^N\operatorname{PPL}_i
\end{equation*}
To better evaluate the generation quality of a language model, we also evaluate the average top-1 predicted token accuracy after greedy decoding, similar to the $\operatorname{HR@1}$ metric described earlier. 

\subsection{Datasets} \label{appendix:datasets}

We list the datasets used in this paper as well as brief data statistics in \cref{tab:data_stats}. We discuss other task-specific preprocessing and train/test splitting strategy below.

\paragraph{Sequential Recommendation.}
Owing to recent work \citep{wsdm22}, we follow the minimal amount of preprocessing by only removing the users with less than two total interactions. 
We simulate the train/test split from the strong-generalization school-of-thought \citep{mvae}, where we keep a completely disjoint set of $80/10/10$\% train, validation, and test users split randomly. For each user in the validation/test-set, the chronologically last interacted item is used for computing ranking metrics, whereas all previous interactions are used as context for the model.
Further, to simulate a realistic recommendation scenario, we compute all metrics on the full item-space without any down-sampling \citep{sampled_metrics}. The definition of all metrics used in this paper can be found in \cref{appendix:metrics}.

\paragraph{Language Modeling.} 

We employ the Penn Treebank (PTB) dataset, an established and openly accessible benchmark extensively utilized in natural language processing and language modeling tasks, as  introduced by \citep{marcus1993building}. We use the train/validation/test split of the official release. The original PTB corpus consists of more than 4.5 million words of American English, featuring a word vocabulary of 9,999 words, including the \texttt{<unk>} token. In our experimentation, we opt to maintain a vocabulary comprising 2,000 words with the highest frequencies, while any out-of-vocabulary words are represented as \texttt{<unk>}.

\subsection{Hyper-parameters} \label{appendix:hyper_params}

For the sake of better reproducibility, we list all hyper-parameter combinations tried for our experiments in \cref{tab:hyper_params,tab:hyper_params_lm}.

\subsection{Additional Details} \label{appendix:additional_details}
We provide brief descriptions about all kinds of model architectures used in this paper for different experiments:

\begin{itemize}[leftmargin=.2in]
    \item \textbf{Transformer \citep{self_attention}.} A causal transformer architecture for language modeling. The hyper-parameters are listed in \cref{tab:hyper_params_lm}.
    \item \textbf{SASRec \citep{sasrec}.} A causal transformer architecture for sequential recommendation. The hyper-parameters are listed in \cref{tab:hyper_params}.
    \item \textbf{GRU4Rec \citep{gru4rec}.} An GRU-based architecture for sequential recommendation, trained using the cross-entropy loss. We use a single, 16-dimensional hidden-layer for the GRU4Rec architecture which was ascertained by conducting a grid-search on the ML-100k's validation-set.
    \item \textbf{FMLP \citep{fmlp}.} An all-MLP architecture which replaces the self-attention blocks in SASRec with filter-enhanced MLPs for sequential recommendation. We use a single, 256-dimensional block for the FMLP architecture which was ascertained by conducting a grid-search on the ML-100k's validation-set.
\end{itemize}

\subsection{Alternative Data Distillation Objectives} \label{appendix:other_dd_objectives}
We provide a brief description and formal optimization of other existing data distillation objectives used in \cref{sec:experiments}. Note that we list the modified optimization objectives where we use \sampler's latent factorization, and use $\operatorname{Opt}$ to denote the underlying inner-loop optimizer (SGD or Adam).

\begin{itemize}[leftmargin=.3in]
    \item \textbf{DC \citep{zhao_dc}:} This data distillation objective performs one-step gradient matching using a distance function $\mathfrak{D} : \mathbb{R}^{|\Phi|} \times \mathbb{R}^{|\Phi|} \mapsto \mathbb{R}$:
    \begin{gather*}
        \argmin{\decoder, \latentsampled} \expectation{\theta_0 \sim \Theta}{\sum_{t=0}^T \mathfrak{D}\left(\nabla_\theta \mathcal{L}_{\dataset}(\theta_t), \nabla_\theta \mathcal{L}_{\sampled}(\theta_t)\right)} \\
        \text{s.t.} \hspace{0.4cm}
        \theta_{t+1} \leftarrow \operatorname{Opt}\left(\theta_t, \nabla_\theta \mathcal{L}_{\sampled}(\theta_t)\right)
        ~~~;~~~
        \sampled \leftarrow \operatorname{softmax}\left(\latentsampled \cdot \decoder \ / \ \tau\right) ~.
    \end{gather*}

    \item \textbf{MM \citep{dd_orig, remember_past}:} The meta-matching objective computes the meta-gradient by unrolling the inner-loop optimization starting from random networks: 
    \begin{gather*}
        \argmin{\decoder, \latentsampled} \expectation{\theta_0 \sim \Theta}{\mathcal{L}_{\dataset}(\theta_T)} 
        \\
        \text{s.t.} \hspace{0.4cm}
        \theta_{t+1} \leftarrow \operatorname{Opt}\left(\theta_t, \nabla_{\theta}\mathcal{L}_{\sampled}(\theta_t)\right) 
        ~~~;~~~
        \sampled \leftarrow \operatorname{softmax}\left(\latentsampled \cdot \decoder \ / \ \tau\right) ~.
    \end{gather*}

    \item \textbf{MTT \citep{mtt}:} The trajectory matching objective computes the meta-gradient by matching the parameters of networks trained on the real data for $M$ optimization steps \vs models trained on the data summary for $N \ll M$ steps. Let $\{ \theta_t^\dataset \}_{t=0}^T$ represent the training trajectory of training $\Phi_\theta$ on \dataset, and $\mathfrak{D} : \mathbb{R}^{|\Phi|} \times \mathbb{R}^{|\Phi|} \mapsto \mathbb{R}$ be a pertinent distance function: 
    \begin{gather*}
        \argmin{\decoder, \latentsampled} \expectation{\theta_0 \sim \Theta}{\sum_{t=0}^{T-M} ~ \frac{\mathfrak{D}\left( \theta_{t+M}^\dataset, \theta_{t+N}^{\sampled} \right)}{\mathfrak{D}\left( \theta_{t+M}^\dataset, \theta_{t}^{\dataset} \right)}} \\ 
        \text{s.t.} \hspace{0.4cm} 
        \theta_{t+i+1}^{\sampled} \leftarrow \operatorname{Opt}\left(\theta_{t+i}^{\sampled}, \nabla_\theta \mathcal{L}_{\sampled}(\theta_{t+i}^{\sampled})\right)
        ~~~;~~~
        \theta_{t+1}^{\sampled} \leftarrow \operatorname{Opt}\left(\theta_{t}^{\dataset}, \nabla_\theta \mathcal{L}_{\sampled}(\theta_{t}^{\dataset}) \right)
        ~~~;~~~ \\
        \sampled \leftarrow \operatorname{softmax}\left(\latentsampled \cdot \decoder \ / \ \tau\right) ~.
    \end{gather*}
\end{itemize}

\section{Additional Results} \label{appendix:more_experiments}

We plot extended plots for the experiments conducted in \cref{sec:experiments}: 
\begin{itemize}[leftmargin=.3in]
    \item In \cref{tab:farzi_results}, we plot the sample quality results of \samplerdata in a tabular format for all datasets and metrics.

    \item In \cref{fig:appendix:cold_user_item}, we analyze \samplerdata's effect on cold users and cold items for all metrics described in \cref{appendix:metrics}.

    \item In \cref{fig:appendix:num_trajectory}, we analyze \sampler's reliance on the number of pretrained trajectories for all metrics described in \cref{appendix:metrics}.

    \item In \cref{fig:appendix:sample_efficiency}, we plot the sample efficiency of \samplerdata for sequential recommendation for all metrics described in \cref{appendix:metrics}. 
\end{itemize}

\def\arraystretch{1.1}
\begin{table*}
    \begin{scriptsize} 
    \caption{Datasets used in this paper as well as a brief set of statistics.}
    \label{tab:data_stats}
    \begin{center}
        \begin{tabular}{c | c c c c}
            \toprule
            \multirow{2}{*}{Dataset} & \# Users / & \# Items / & \# Interactions / & Seq. Length \\
            & \# Sentences & \# Unique tokens & \# Total tokens & $\operatorname{Mean} / \operatorname{Median} / \operatorname{Min}$ \\
            
            \midrule
            
            \textbf{Amazon Magazine \citep{amz_data}}    & 3k & 1.3k & 12k & $4.10 ~/~ 3 ~/~ 3$ \\
            \textbf{ML-100k \citep{movielens}}           & 943 & 1.6k & 100k & $104.04 ~/~ 63 ~/~ 18$ \\
            \textbf{ML-1M \citep{movielens}}             & 6k & 3.7k & 1M & $165.22 ~/~ 95 ~/~ 20$ \\
            \textbf{Netflix \citep{netflix_data}}        & 476k & 17k & 100M & $210.91 ~/~ 98 ~/~ 3$ \\
            \textbf{PTB \citep{marcus1993building}}        & 49k & 10k & 1M & $22 ~/~ 21 ~/~ 2$ \\
            \bottomrule
        \end{tabular}
    \end{center}
    \end{scriptsize}
\end{table*}
\begin{table*}
    \begin{footnotesize} 
    \caption{List of all hyper-parameters combinations tried for \sampler and other baselines for sequential recommendation. 
    }
    \label{tab:hyper_params}
    \begin{center}
        \begin{tabular}{c c c | c c c c}
            \toprule
            \multicolumn{2}{c}{Hyper-Parameter} & Model & Magazine & ML-100k & ML-1M & Netflix \\
            
            \midrule
            
            \multicolumn{2}{c}{\multirow{3}{*}{Latent size}}    
            & SASRec        & \multicolumn{4}{c}{\multirow{3}{*}{\{8, 16, 32, 50, 64, 128\}}} \\
            & & GRU4Rec     & & & & \\
            & & FMLP        & & & & \\
            
            \midrule
            
            \multicolumn{2}{c}{\multirow{3}{*}{\# Layers}}      
            & SASRec        & \multicolumn{4}{c}{\multirow{3}{*}{\{1, 2\}}} \\
            & & GRU4Rec     & & & & \\
            & & FMLP        & & & & \\
            
            \midrule

            \multicolumn{2}{c}{\multirow{2}{*}{Attention Heads}}
            & SASRec        & \multicolumn{4}{c}{\multirow{2}{*}{\{1, 2\}}} \\
            & & FMLP        & & & & \\
            
            \midrule
            
            \multicolumn{2}{c}{\multirow{3}{*}{Learning rate}}  
            & SASRec        & \multicolumn{4}{c}{\{0.01, 0.02, 0.05\}} \\
            & & GRU4Rec     & \multicolumn{4}{c}{\{0.01, 0.02, 0.05\}} \\
            & & FMLP        & \multicolumn{4}{c}{\{0.0001, 0.0002, 0.0005\}} \\
            
            \midrule
            
            \multicolumn{2}{c}{\multirow{4}{*}{Dropout}}        
            & SASRec        & \multicolumn{4}{c}{\multirow{4}{*}{\{0.0, 0.2, 0.4\}}} \\
            & & GRU4Rec     & & & & \\
            & & FMLP        & & & & \\
            & & \sampler    & & & & \\
            
            \midrule


            \multicolumn{2}{c}{$\xi$}       & \sampler  & \{10, 20\} & \{50, 100, 150\} & \{50, 100, 150\} & 200 \\
            \midrule

            \multicolumn{2}{c}{$d$}         & \sampler  & 8 & 8 & 32 & 32 \\
            \midrule

            \multicolumn{2}{c}{$\tau$}      & \sampler  & \multicolumn{4}{c}{\{0.5, 1, 2\}} \\
            \midrule
            
            \multicolumn{2}{c}{$|\Omega|$}  & \sampler  & 100 & 100 & 100 & 50 \\
            \midrule
            
            \multirow{6}{*}{\STAB{Inner\\loop}} 
            & Weight Decay & \multirow{6}{*}{\sampler}  & \multicolumn{4}{c}{\{0, $10^{-6}$\}} \\
            & Learning Rate                           & & \multicolumn{4}{c}{\{0.01, 0.02\}} \\
            & \# Steps                                & & \multicolumn{4}{c}{\{100, 200, 300\}} \\
            & $\beta_1$                               & & \multicolumn{4}{c}{0.9} \\
            & $\beta_2$                               & & \multicolumn{4}{c}{0.999} \\
            & SGD Momentum                            & & \multicolumn{4}{c}{\{0.5, 0.75, 0.9, 0.95, 0.99\}} \\
            
            \midrule

            \multirow{3}{*}{\STAB{Outer\\loop}} 
            & Weight Decay & \multirow{3}{*}{\sampler}  & \multicolumn{4}{c}{\{0, $10^{-6}$, $10^{-4}$\}} \\
            & Learning Rate                           & & \multicolumn{4}{c}{0.01} \\
            & \# Steps                                & & \multicolumn{4}{c}{4000} \\
            
            \midrule

            \multirow{2}{*}{\STAB{Batch\\size}} 
            & \dataset & \multirow{2}{*}{\sampler}      & \multicolumn{4}{c}{512} \\
            & \sampled                                & & --- & --- & 50 & 25 \\
            
            \bottomrule
        \end{tabular}
    \end{center}
    \end{footnotesize}
\end{table*}

\begin{table*}
    \begin{footnotesize} 
    \caption{List of all hyper-parameters combinations tried for \sampler and other baselines for language modeling. }
    \label{tab:hyper_params_lm}
    \begin{center}
        \begin{tabular}{c c c | c }
            \toprule
            \multicolumn{2}{c}{Hyper-Parameter} & Model & Penn Treebank \\
            
            \midrule
            
            \multicolumn{2}{c}{\multirow{2}{*}{Latent size}}    
            & Transformer        & \multirow{2}{*}{16} \\
            & & RNN     &  \\
            
            \midrule
            
            \multicolumn{2}{c}{\multirow{2}{*}{\# Layers}}      
            & Transformer        & \multirow{2}{*}{1} \\
            & & RNN     &  \\
            
            \midrule

            \multicolumn{2}{c}{\multirow{2}{*}{Attention Heads}}
            & Transformer        & \multirow{2}{*}{1} \\
            & & RNN        &  \\
            
            \midrule
            
            \multicolumn{2}{c}{\multirow{2}{*}{Learning rate}}  
            & Transformer        & \{0.01, 0.02, 0.05\} \\
            & & RNN     & \{0.01, 0.02, 0.05\} \\

            \midrule
            
            \multicolumn{2}{c}{\multirow{2}{*}{Dropout}}        
            & Transfomer        & \multirow{2}{*}{\{0.0, 0.2\}} \\
            & & RNN    &  \\

            \midrule


            \multicolumn{2}{c}{$\xi$}       & \sampler  & \{5, 15\} \\
            \midrule

            \multicolumn{2}{c}{$d$}         & \sampler  & 8  \\
            \midrule

            \multicolumn{2}{c}{$\tau$}      & \sampler  & \{0.5, 1, 2\} \\
            \midrule
            
            \multicolumn{2}{c}{$|\Omega|$}  & \sampler  & 400  \\
            \midrule
            
            \multirow{6}{*}{\STAB{Inner\\loop}} 
            & Weight Decay & \multirow{6}{*}{\sampler}  & \{0, $10^{-7}$\} \\
            & Learning Rate                           & & \{0.01, 0.02\} \\
            & \# Steps                                & & \{200, 300,400, 500, 600\} \\
            & $\beta_1$                               & & 0.999 \\
            & SGD Momentum                            & & - \\
            
            \midrule

            \multirow{3}{*}{\STAB{Outer\\loop}} 
            & Weight Decay & \multirow{3}{*}{\sampler}  & \{0, $10^{-6}$, $10^{-4}$\} \\
            & Learning Rate                           & & 0.01 \\
            & \# Steps                                & & 8000 \\
            
            \midrule

            \multirow{2}{*}{\STAB{Batch\\size}} 
            & \dataset & \multirow{2}{*}{\sampler}      & 256 \\
            & \sampled                                & & --- \\
            
            \bottomrule
        \end{tabular}
    \end{center}
    \end{footnotesize}
\end{table*}

\newpage

\def\arraystretch{1.1}
\setlength{\tabcolsep}{0.5em} 
\begin{table*}
    \begin{scriptsize} 
    \caption{Performance change of SASRec \& Transformer with various sizes of \samplerdata for sequential recommendation \& language modeling tasks respectively. The best result for each dataset \& metric is colored \textcolor{orange}{\textbf{orange}}.}
    \label{tab:farzi_results}
    \begin{center}
        \begin{tabular}{c c | c c c c c | c c}
            \toprule
            \STAB{Dataset \&\\Model} & \STAB{\samplerdata\\size} & HR@10 & HR@100 & nDCG@10 & nDCG@100 & AUC & PPL & Acc. \\[3pt]
            \midrule
            \multirow{5}{*}{\STAB{Magazine\\\&\\SASRec}} & \texttt{[10 x 10]} $\equiv 0.3\%$ & 23.3 & \orformat{52.3} & 15.8 & 21.1 & \orformat{0.8428} & - & - \\
            & \texttt{[25 x 20]} $\equiv 0.8\%$ & 23.9 & \orformat{52.3} & 16.5 & 21.6 & 0.8307 & - & - \\
            & \texttt{[50 x 20]} $\equiv 1.6\%$ & \orformat{24.5} & 52.1 & \orformat{17.1} & \orformat{22.1} & 0.8291 & - & - \\
            \cmidrule{2-9}
            & Full-data & 23.2 & 52.0 & 16.9 & 21.7 & 0.8223 & - & - \\
            \midrule
            \multirow{6}{*}{\STAB{ML-100k\\\&\\SASRec}} & \texttt{[10 x 150]} $\equiv 1\%$ & 17.3 & 61.2 & 9.2 & 17.7 & 0.8957 & - & - \\
            & \texttt{[25 x 150]} $\equiv 2.6\%$ & 19.3 & 61.6 & 9.9 & 17.7 & 0.902 & - & - \\
            & \texttt{[50 x 50]} $\equiv 5.3\%$ & \orformat{19.6} & \orformat{62.9} & 9.9 & \orformat{18.1} & \orformat{0.9016} & - & - \\
            & \texttt{[100 x 100]} $\equiv 10.6\%$ & 19.5 & 61.9 & \orformat{10.1} & \orformat{18.1} & \orformat{0.9016} & - & - \\
            \cmidrule{2-9}
            & Full-data & 18.2 & 60.6 & 9.3 & 17.6 & 0.9011 & - & - \\
            \midrule
            \multirow{7}{*}{\STAB{ML-1M\\\&\\SASRec}} & \texttt{[10 x 150]} $\equiv 0.1\%$ & 22.4 & 59.0 & 12.0 & 19.0 & 0.923 & - & - \\
            & \texttt{[50 x 100]} $\equiv 0.8\%$ & 24.8 & 61.6 & 13.8 & 20.8 & 0.9301 & - & - \\
            & \texttt{[100 x 100]} $\equiv 1.6\%$ & 25.6 & \orformat{63.6} & 14.1 & 21.3 & \orformat{0.9317} & - & - \\
            & \texttt{[200 x 50]} $\equiv 3.3\%$ & 25.4 & 61.8 & 14.1 & 21.0 & 0.9315 & - & - \\
            & \texttt{[500 x 50]} $\equiv 8.2\%$ & \orformat{26.2} & 61.0 & 13.8 & 20.7 & 0.9293 & - & - \\
            \cmidrule{2-9}
            & Full-data & \orformat{26.2} & 62.8 & \orformat{14.4} & \orformat{21.8} & 0.9291 & - & - \\
            \midrule
            \multirow{5}{*}{\STAB{Netflix\\\&\\SASRec}} & \texttt{[50 x 200]} $\equiv 0.01\%$ & 15.6 & 38.0 & 9.9 & 14.1 & 0.9235 & - & - \\
            & \texttt{[500 x 200]} $\equiv 0.1\%$ & 17.8 & 40.7 & 11.6 & 16.1 & 0.9449 & - & - \\
            & \texttt{[2000 x 200]} $\equiv 0.4\%$ & 17.5 & 40.3 & 11.3 & 15.8 & 0.9455 & - & - \\
            \cmidrule{2-9}
            & Full-data & \orformat{18.1} & \orformat{41.9} & \orformat{11.8} & \orformat{16.4} & \orformat{0.947} & - & - \\
            \midrule
            \multirow{6}{*}{\STAB{PTB\\\&\\Transformer}} & \texttt{[10 x 50]} $\equiv 0.02\%$ & - & - & - & - & - &  238.5 & 20.48 \\
            & \texttt{[200 x 50]} $\equiv 0.47\%$ & - & - & - & - & - & 124.0 & 24.0 \\
            & \texttt{[400 x 50]} $\equiv 1\%$ & - & - & - & - & - & 91.9 & 25.16 \\
            & \texttt{[2000 x 50]} $\equiv 4.7\%$ & - & - & - & - & - & 91.0 & 25.4 \\
            \cmidrule{2-9}
            & Full-data & - & - & - & - & - & \orformat{72.10} & \orformat{26.03} \\
            \bottomrule
        \end{tabular}
    \end{center}
    \end{scriptsize}
\end{table*}

\begin{figure*}[t!] \centering
    \includegraphics[width=\linewidth]{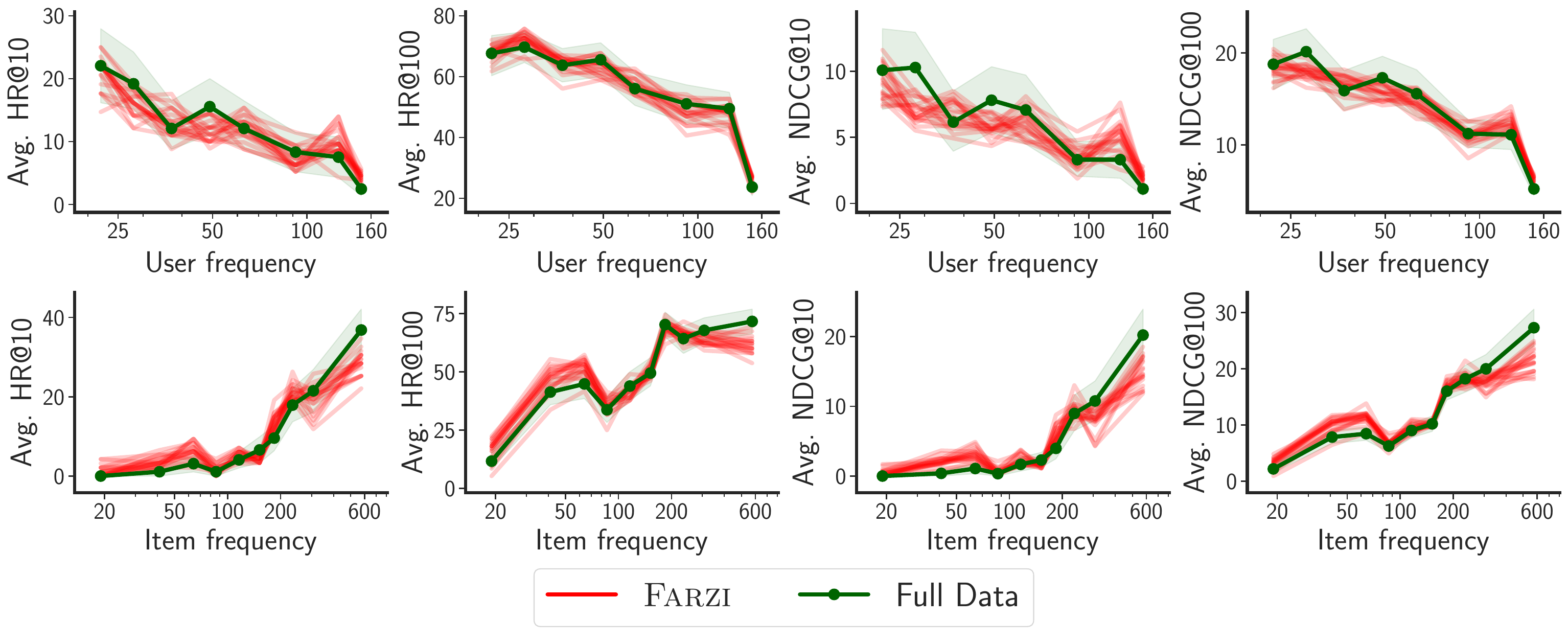}
    \caption{Performance of SASRec trained on \texttt{[50$\times$150]} sized \samplerdata of the ML-100k dataset, and stratified over the popularity of users and items. The user/item popularity spectrum is quantized into $10$ equal sized bins and the average corresponding metric is plotted.}
    \label{fig:appendix:cold_user_item}
\end{figure*} 

\begin{figure*}[t!] \centering
    \includegraphics[width=\linewidth]{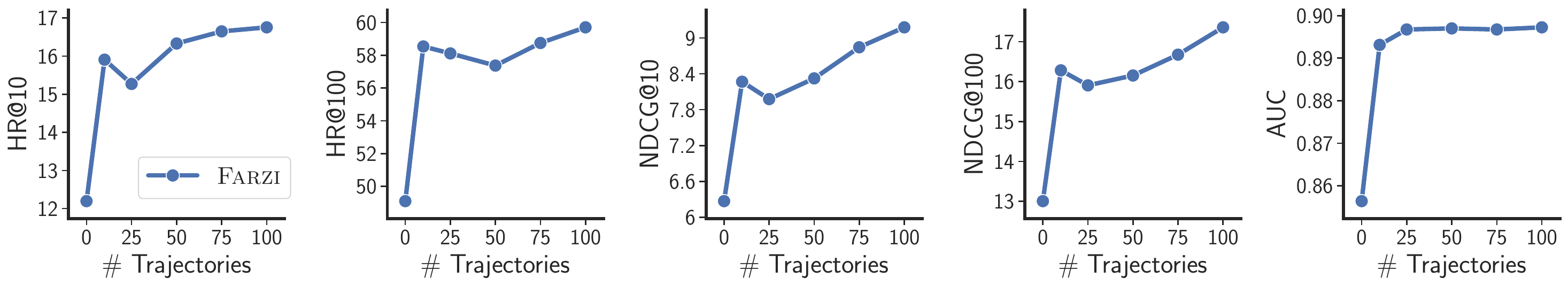}
    \caption{Performance change of SASRec trained on \texttt{[10$\times$150]} sized \samplerdata of the ML-100k dataset with increasing number of pretrained trajectories.}
    \label{fig:appendix:num_trajectory}
\end{figure*} 

\begin{figure*}[t!] \centering
    \includegraphics[width=\linewidth]{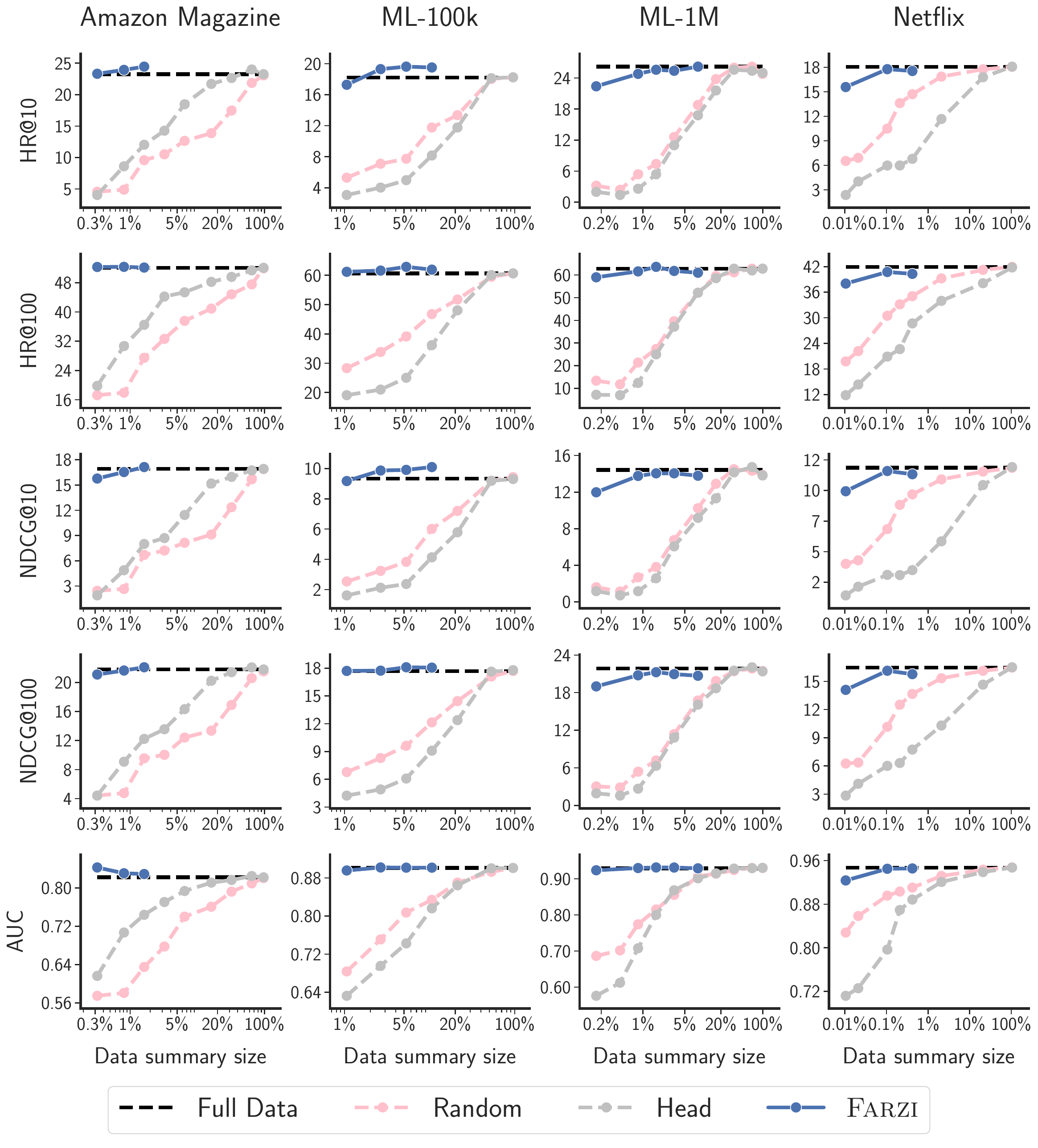}
    \caption{Performance change of SASRec model with increasing data summary size (log-scale) for sequential recommendation.}
    \label{fig:appendix:sample_efficiency}
\end{figure*}

\end{document}